\definecolor{myred}{rgb}{1, 0.051, 0.341}
\definecolor{myblue}{rgb}{0.118, 0.533, 0.898}
\definecolor{color_03}{rgb}{0.9843, 0.7372, 0.0156}
\definecolor{color_05}{rgb}{0.890, 0.45490, 0}
\definecolor{color_07}{rgb}{0.9176, 0.2627, 0.2078}
\def\bbE{\mathbb{E}}
\def\bbN{\mathbb{N}}
\def\bbP{\mathbb{P}}
\def\bbR{\mathbb{R}}
\def\bbV{\mathbb{V}}
\def\cov{\mathrm{cov}}
\def\corr{\mathrm{corr}}
\newcommand{\argmin}{\operatorname*{argmin}}
\newcommand{\argmax}{\operatorname*{argmax}}
\newtheorem{theorem}{Theorem}
\newtheorem{definition}{Definition}
\newtheorem{proposition}{Proposition}
\newtheorem{remark}{Remark}
\newcommand{\img}{\mathcal I}
\newcommand{\txt}{\mathcal T}
\newcommand{\expl}{\mathbf{e}}
\newcommand{\method}{\textsc{FIxLIP}}
\newcommand{\methodx}{\textsc{FIxLIP}\xspace}
\newcommand{\ant}{\raisebox{-2pt}{\includegraphics[height=11pt]{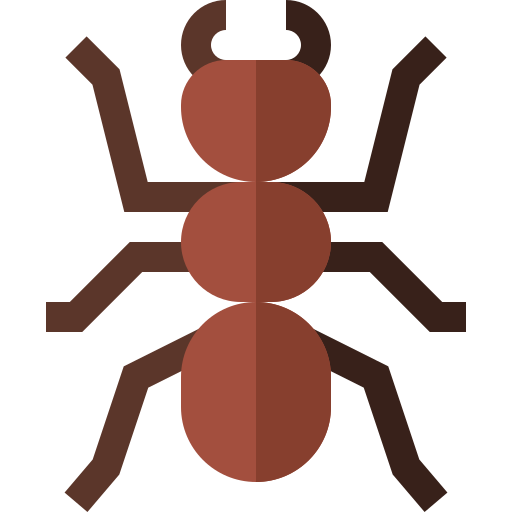}}}
\title{Explaining Similarity in Vision--Language Encoders\\with Weighted Banzhaf Interactions}
\author{%
  Hubert Baniecki\thanks{Corresponding author: \texttt{h.baniecki@uw.edu.pl} \hfill Code: \url{https://github.com/hbaniecki/fixlip}} \\
  University of Warsaw \\
  Warsaw University of Technology \\
  \And
  Maximilian Muschalik\\
  LMU Munich, MCML\\
  \And
  Fabian Fumagalli\\
  Bielefeld University\\
  \AND
  Barbara Hammer\\
  Bielefeld University\\
  \And
  Eyke Hüllermeier\\
  LMU Munich, MCML, DFKI\\
  \And
  Przemyslaw Biecek \\
  University of Warsaw \\
  Warsaw University of Technology \\
}
\begin{document}

\maketitle


\begin{abstract}
    Language--image pre-training (LIP) enables the development of vision--language models capable of zero-shot classification, localization, multimodal retrieval, and semantic understanding.
    Various explanation methods have been proposed to visualize the importance of input image--text pairs on the model's similarity outputs.
    However, popular saliency maps are limited by capturing only first-order attributions, overlooking the complex cross-modal interactions intrinsic to such encoders. 
    We introduce faithful interaction explanations of LIP models~(\methodx) as a unified approach to decomposing the similarity in vision--language encoders.
    \methodx is rooted in game theory, where we analyze how using the weighted Banzhaf interaction index offers greater flexibility and improves computational efficiency over the Shapley interaction quantification framework.
    From a practical perspective, we propose how to naturally extend explanation evaluation metrics, such as the pointing game and area between the insertion/deletion curves, to second-order interaction explanations.
    Experiments on the MS COCO and ImageNet-1k benchmarks validate that second-order methods, such as \methodx, outperform first-order attribution methods.
    Beyond delivering high-quality explanations, we demonstrate the utility of \methodx in comparing different models, e.g. CLIP vs. SigLIP-2.
\end{abstract}

\section{Introduction}

Contrastive language--image pre-training \citep[CLIP,][]{radford2021learning} revolutionized computer vision by learning data representations well-performing in a plethora of downstream tasks.
Scaling the training to predict similarity between image and text, combined with continuous architectural improvements~\citep[e.g. SigLIP,][]{zhai2023siglip} and increasing the size of datasets, leads to the development of powerful vision--language encoders~\citep[VLEs,][]{zhai2023siglip,wan2024locca,chun2025probabilistic,tschannen2025siglip2}.
These rather opaque encoders become the backbone components in large vision--language models capable of actual conversation and reasoning~\citep{alayrac2022flamingo,li2023blip,liu2023visual}, and are increasingly applied in high-stakes decision-making, like in the case of medical imaging~\citep{ikezogwo2023quilt1m,zhang2024biomedclip,zhao2025clip}.

The limitations in the way CLIP represents the visual world have been extensively studied~\citep{yuksekgonul2023when,lewis2024does,udandarao2024visual,covert2025locality,xiao2025flairvlm}.
For example, a recent study by \citet{tong2024eyes} suggests vision--language models are limited by the systematic shortcomings of CLIP, e.g. they fail when questioned about directions, quantity of objects in an image, or recognizing the presented text in a visual form.
The emergence of ``CLIP-blind pairs'', inputs that CLIP perceives as similar despite their clear differences, could lead to critical errors when applied in medical visual question answering and treatment recommendation~\citep{zhao2025clip}.
One way of validating similarity predictions at inference time is explaining the encoders with saliency maps~\citep{morch1995,chefer2021generic,wang2023visual,zhao2024gradeclip,li2025closer}.
However, these first-order importance scores are limited by their unimodal view of the model's output~\citep{agarwal2025rethinking}.
Vision--language encoders cannot be faithfully explained without taking into account the cross-modal interactions between image--text pairs~\citep{liang2023multiviz,joukovsky2023model}.

We thus propose a model-agnostic approach to interpreting VLEs based on a game-theoretical perspective where input tokens form coalitions of players in a cooperative game~\citep{lundberg2017unified,fumagalli2023shapiq}.
In doing that, we extend the faithful Banzhaf interaction index~\citep{tsai2023faith} to a weighted case~\citep{marichal2011weighted}, addressing the emerging requirement for controllable sparsity in coalition sampling~\citep{hase2021outofdistribution,naofumi2023deletion,bluecher2024decoupling,zheng2025ffidelity}.
Specifically, sparse inputs with many masked tokens become out-of-distribution, i.e. images become unrecognizable and text captions become ambiguous.
Figure~\ref{fig:abstract} shows a high-level view of our approach. 
We sample uniformly $m^2$ variations of the input image--text pair. 
Each mask is sampled uniformly with probability $p$ for each token in a mask, and we take all pairwise combinations between $m$ masked images and $m$ masked texts.
Then, we apply the explained model to predict similarity values for each of the input variations.
Finally, we apply weighted least squares to approximate the model’s predictions from sampled binary masks.
The resulting explanation assigns attribution scores to each token and interaction scores to each pair of tokens in an image--text pair.

\begin{figure}
    \centering
    \includegraphics[width=0.99\linewidth]{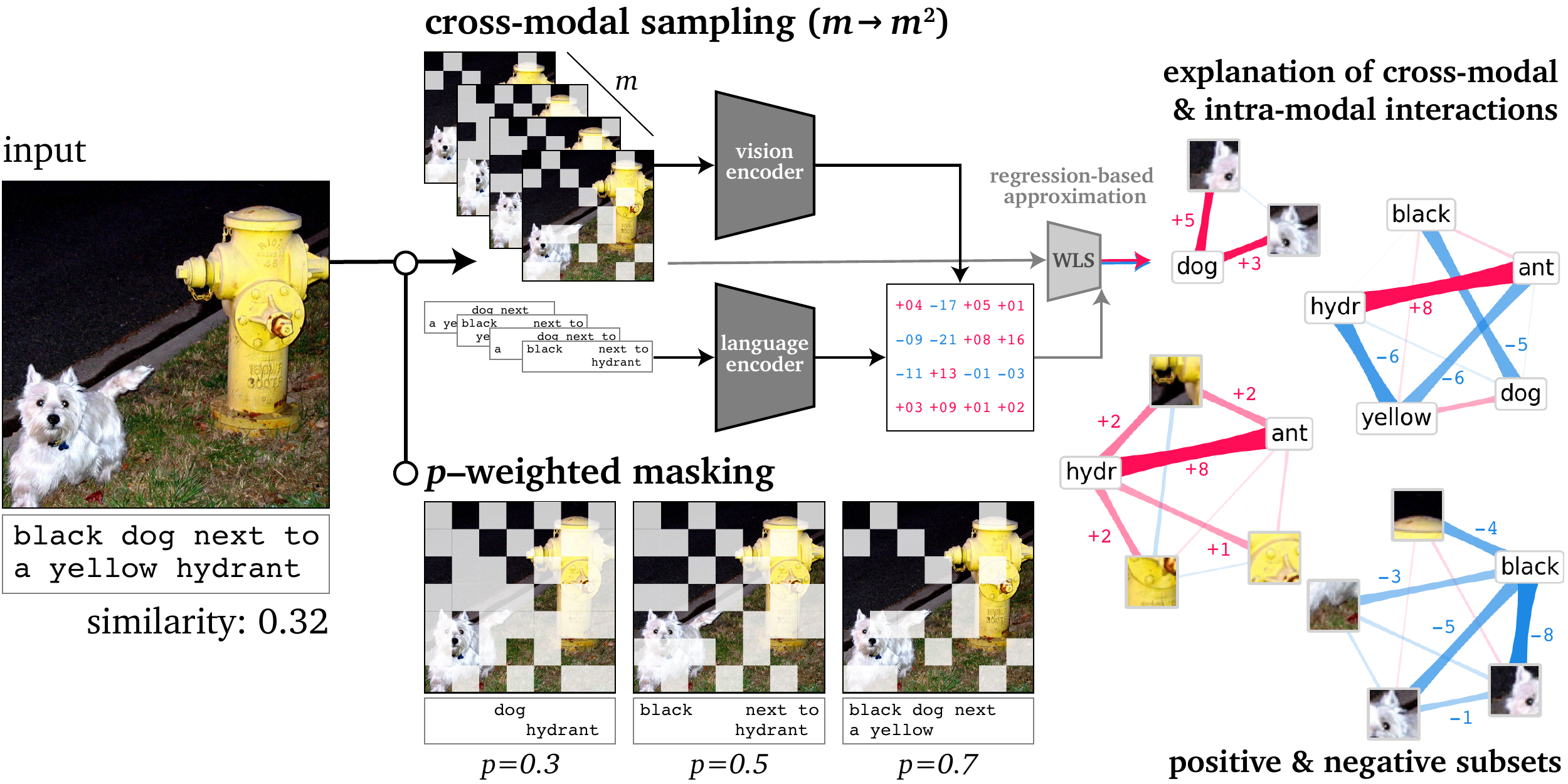}
    \caption{
    \textbf{Explaining similarity in vision--language encoders with weighted Banzhaf interactions.}
    We propose a cross-modal sampling strategy to efficiently query the model for $m^2$ game values from $m$ coalitions, and $p$-weighted masking to circumvent querying the model on out-of-distribution inputs.
    A regression-based approximation with weighted least squares (WLS) of second-order attributions gives a faithful decomposition of the predicted similarity score.
    Explanations of cross-modal and intra--modal interactions can be visualized and analyzed to interpret the CLIP's similarity prediction.
    The {\color{myred}red} values denote positive interactions contributing to an {\color{myred}increase} in similarity, while {\color{myblue}blue} denotes interactions between tokens contributing to a {\color{myblue}decrease} in~similarity.
    }
    \label{fig:abstract}
\end{figure}

\textbf{Contributions.} 
Our work advances literature in multiple ways:
\textbf{(1)~A game-theoretic explanation of vision--language encoders.} 
We introduce faithful interaction explanations of LIP models (\methodx) that provide a unique perspective on decomposing the similarity predictions of VLEs.
Our results highlight the \emph{necessity} to consider cross-modal interactions between text and image inputs for an accurate model interpretation, beyond the first-order attribution.
In fact, this is the first work to argue for using \emph{weighted} Banzhaf interactions in this domain to overcome the out-of-distribution problem apparent in removal-based explainability. 
\textbf{(2)~Efficient computation allows scaling to larger models.}
We propose a \emph{cross-modal} sampling strategy for approximating \methodx that improves its efficiency by $5$--$20\times$ over traditional Shapley interaction quantification.
We further scale the regression-based approximation of interactions to hundreds of tokens by prioritizing their subset for a faithful explanation.  
\textbf{(3)~Evaluation metrics for second-order interactions.}
We extend explanation evaluation metrics---the pointing game and area between the insertion/deletion curves---to explanation methods of higher order.
Experiments on the MS COCO and ImageNet-1k benchmarks validate that second-order methods, such as \methodx, outperform first-order attributions.
\textbf{(4)~\methodx facilitates various approaches to understanding vision--language encoders.}
Finally, we demonstrate the utility of \methodx in applications like a model-agnostic comparison between CLIP and SigLIP-2.

\section{Related Work}

\textbf{Attribution explanations of CLIP.}
Multimodal explanations have been studied in various tasks, including visual question answering~\citep{park2018multimodal,hessel2020multimodal}, visual reasoning~\citep{lyu2022dime}, video classification~\citep{wu2019faithful}, and sentiment analysis~\citep{wang2022m2lens}. 
Our work considers explaining the bi-modal encoder prediction in the means of input attributions, specifically image and text token attributions of similarity predictions in CLIP.
In this context, existing gradient-based~\citep{zhao2024gradeclip}, attention-based~\citep{chefer2021generic,li2025closer}, and information-based~\citep{wang2023visual,zhu2025narrowing} methods approximate only \emph{first-order} attributions visualized as saliency maps.
Yet,~\citet{liang2023multiviz} perform extensive user studies showing that visualizing the \emph{second-order} attributions~(pairwise interactions) is necessary for understanding complex multimodal models.
\citet{joukovsky2023model} propose to approximate these interactions with bilinear models~\citep[also for image--image encoders,][]{sammani2024visualizing}. 
We formalize the problem in the means of game theory and propose an efficient approximation of cross-modal and intra-modal interactions~(Figure~\ref{fig:abstract}), discussing its several desirable theoretical properties.
In concurrent work, \citet{moeller2025explaining} propose to approximate only cross-modal interactions based on the Integrated Hessians methodology~\citep{janizek2021explaining}.
Our proposed evaluation metrics for interaction-based explanations can be used to empirically cross-compare the above-mentioned attribution explanations.

\textbf{Mechanistic interpretability of CLIP.}
Our research is orthogonal to work on the concept-based interpretability of CLIP's internal representations. 
Research in this direction considers explaining particular neurons~\citep{goh2021multimodal}, concept-based image classification~\citep{sammani2024interpreting}, linear probing~\citep{bhalla2024interpreting}, and training sparse autoencoders~\citep{lim2025sparse,zaigrajew2025interpreting}.
\citet{g2024interpreting} investigate how the individual model's components affect its final representation.
Beyond CLIP, \citet{balasubramanian2024decomposing} analyze interpreting image representations via text in alternative vision transformer architectures~\citep[e.g. DINO,][]{caron2021emerging}.

\textbf{Shapley and Banzhaf interactions in machine learning.}
We build on the developments using game-theoretic interaction indices to understand complex machine learning models~\citep{sundararajan2020shapley,bordt2023shapley,pelegrina2023kadditive,fumagalli2023shapiq,tsai2023faith}.
Our goal in this work is not to compare with the most recent advancements in approximating the interaction index~\citep{kang2024learning,musco2025provably,enouen2025instashap}, but rather to faithfully explain the similarity predicted by VLEs.
From the first-order perspective, \citet{parcalabescu2023mmshap,parcalabescu2025do} apply the Shapley value to measure the importance each modality has on various tasks solved by CLIP.
\citet{tsai2023faith} and \citet{fumagalli2024kernelshapiq} propose a weighted least squares regression approximation of higher-order interactions, which we incorporate as a baseline in our methodology.
Recently, \citet{kang2025spex} applied a sparse Fourier transform to scale feature interaction explanations for large language models, while we explain vision--language encoders instead.
Related to our contribution of weighted Banzhaf interaction explanations is work on using weighted Banzhaf values to estimate data valuation scores~\citep{li2023robust,li2024one}, which are the importance that training data points or subsets have on the model's performance.
\citet{jin2024hierarchical} use Banzhaf interactions to enhance the training of video--language encoders.
For a comprehensive overview of work in this direction, see~\citep{rozemberczki2022shapley,muschalik2024shapiq} and references given there.

\section{A Game-theoretic Explanation of Similarity in Vision--Language Encoders}\label{sec:methods}

A vision--language encoder $f$ consists of a vision encoder $f_{\img}: \bbR^{n_{\img}} \to \bbR^{d}$ with $n_{\img} \in \bbN$ image patches (tokens), and a language encoder $f_{\txt}: \bbR^{n_\txt} \to \bbR^{d}$ with $n_{\txt} \in \bbN$ text tokens.
For a given input pair $(x_\img,x_\txt)$, the model computes a cosine similarity of their $d$-dimensional embeddings as
\begin{equation}\label{eq:vle}
    f(x_\img,x_\txt) \coloneq \cos\!\big(f_\img(x_\img),f_\txt(x_\txt)\big) = \frac{f_\img(x_\img) \cdot f_\txt(x_\txt)}{\Vert f_\img(x_\img) \Vert \cdot \Vert f_\txt(x_\txt)\Vert}.
\end{equation}
In this context, we index the set of image tokens with $N_\img \coloneq \{1,\dots,n_\img\}$ and the set of text tokens with $N_\txt \coloneq \{n_\img+1,\dots,n_\img+n_\txt\}$.
Related attribution methods construct explanations that assign importance values to all individual input tokens.
To understand cross-modal and intra-modal relationships between tokens, we extend these explanations by adding all pairwise token interactions.
\begin{definition}[Explanation]\label{def:interaction-explanation}
    An explanation $\expl \in \bbR^{\vert \mathcal{B} \vert}$ assigns a constant $\expl_{0}$, individual attributions $\expl_{i} \in \bbR$ and pairwise interactions $\expl_{\{i,j\}} \in \bbR$, for an explanation basis
    \begin{align*}
    \mathcal{B} \coloneq \underbrace{\{0\}}_{\textbf{constant}} \cup \underbrace{\{i: i \in N_\img \cup N_\txt\}}_{\textbf{individual tokens}} \cup \underbrace{\{ \{i,j\}: i,j \in N_\img \cup N_\txt, i \neq j\}}_{\textbf{two-token interaction sets}}.
    \end{align*}
\end{definition}
An interaction explanation can be viewed as a complete graph with weighted nodes and edges, similar to the SI-Graph \citep{muschalik2025exact}, or
a symmetric quadratic matrix of dimension $n_\img+n_\txt$, where the diagonal represents token attributions.
To compute explanations, we mask tokens with a pre-defined baseline, depending on the encoder's architecture, e.g. attention masking for text tokens (we defer implementation details to Appendix~\ref{app:implementation_details}).
The masking operator is used to define the \methodx game that captures all possible masks.
\begin{definition}[Masking]
    We define the masking operator $\oplus_{M_o}: \bbR^n \times \bbR^n \to \bbR^n$ for $n \in \{n_\img,n_\txt\}$ that compute for a subset of indices $M_o \subseteq N$ with $N \in \{N_\img,N_\txt\}$ and inputs $x,b \in \bbR^n$ as
    \begin{align*}
        x \oplus_{M_o} b \coloneq \begin{cases}
            x_i, \text{ if } i \in M_o, \\
            b_i, \text{ if } i \in N \setminus M_o.
        \end{cases}
    \end{align*}
\end{definition}
\begin{definition}[Game]\label{def:game}
    For an input image--text pair $(x_\img,x_\txt)$ and a baseline $(b_\img,b_\txt)$ indexed by $N_\img \cup N_\txt$, the \methodx explanation game $\nu: 2^{N_\img \cup N_\txt} \to \bbR$ measures the similarity of the inputs given any mask $M \subseteq N_\img \cup N_\txt$ as
    $
    \nu(M) \coloneq \nu(M \cap N_\img, M \cap N_\txt) \coloneq f(x_\img \oplus_{M \cap N_\img} b_\img, x_\txt \oplus_{M \cap N_\txt} b_\txt).
    $
\end{definition}
The \methodx game measures the similarity of the masked image and text inputs for every possible mask.
In the following, we will quantify attributions of individual tokens, as well as cross- and intra-modal interactions, that \emph{faithfully} explain the \methodx game, and satisfy important axioms.

\subsection{\method-\texorpdfstring{$\boldsymbol{p}$}{p} explanations via weighted faithful Banzhaf interactions}\label{sec:weighted-banzhaf}

In this section, we formally define the \methodx explanation as an instance of \cref{def:interaction-explanation}, which faithfully describes the similarity of all masked inputs $\nu(M)$.
We thus define $\expl$ as an \emph{additive explanation with interaction terms} that recovers $\nu(M) \approx \hat\nu_\expl(M)$ via a $2$-additive game \citep{grabisch2016set} as
\begin{align}
    \hat\nu_{\expl}(M) \coloneq \expl_{0} + \sum_{i \in M} \expl_{i} + \sum_{\{i,j\} \subseteq M: i \neq j} \expl_{\{i,j\}} \text{ for all masks } M \subseteq N_\img \cup N_\txt.
\end{align}

A natural choice of faithfulness is to measure the squared error $\big(\nu(M)-\hat\nu_\expl(M)\big)^2$ across all masks.
To further distinguish in-distribution (few tokens masked) and out-of-distribution (many tokens masked) inputs, we associate a mask weight according to its size $\vert M \vert$ in the following metric.

\begin{definition}[$p$-faithfulness]\label{def:p-faithfulness-measure}
For $p \in (0,1)$ and $\nu,\hat\nu: 2^{N_\img \cup N_\txt}\to \bbR$, we measure $p$-faithfulness $\mathfrak{F}_p$ as
$
\mathfrak{F}_p(\nu,\hat\nu) \coloneq \sum_{M \subseteq N_\img \cup N_\txt} p^{\vert M \vert} (1-p)^{n_\img + n_\txt - \vert M \vert} \big(\nu(M) - \hat\nu(M)\big)^2$.
\end{definition}

\begin{remark}\label{rem:p-faithfulness}
    Computing p-faithfulness requires evaluating $2^{n_\img+n_\txt}$ masks, which is infeasible in practice.
    Instead, we write $\mathfrak F_p$ as an expectation over randomly generated masks $ \mathfrak F_p = \bbE_{M \sim \bbP_p}\big[\big(\nu(M) - \hat\nu(M)\big)^2\big]$, where $\bbP_p(M) \coloneq p^{\vert M \vert} (1-p)^{n_\img + n_\txt - \vert M \vert}$ is a probability distribution over $2^{N_\img \cup N_\txt}$. This formulation allows us to estimate $\mathfrak{F}_p$ using Monte Carlo integration.
\end{remark}
The hyperparameter $p$ determines the importance of each mask and can be viewed as the probability of a token being active, i.e. its index being included in $M$.
The case $p=0.5$ equally weights all masks, $p>0.5$ prioritizes masks with many active tokens, i.e. preferring in-distribution inputs, whereas $p<0.5$ prioritizes sparse masks preferring more out-of-distribution inputs.
\begin{definition}[\methodx-$p$]
We define the \methodx-$p$ explanation as~$\expl^{\method\text{-}p} \coloneq \argmin_{\expl}\mathfrak{F}_p(\nu,\hat\nu_\expl)$.
\end{definition}

The \method-$p$ explanation is the most faithful approximation of $\nu$ with respect to $\mathfrak F_p$.
Its interactions correspond to the weighted Banzhaf interactions \citep{marichal2011weighted}.
For $p=0.5$, \method-$p$ is the faithful Banzhaf interaction index \citep{tsai2023faith} with a known analytical solution \citep{hammer1992approximations}.

\begin{remark}
    Weighted Banzhaf interactions define a cardinal-probabilistic interaction index that satisfies the linearity, symmetry, and dummy axiom \citep{fujimoto2006axiomatic}.
    Alternatively, faithful Shapley interactions optimize a variant of the faithfulness metric \citep{tsai2023faith}.
    In our context, weighted Banzhaf interactions have two benefits:
    First and foremost, the hyperparameter $p$ allows for prioritizing masks that are expected to better reflect in-distribution inputs, and provides a flexible explanation framework with intuitive weights.
    Second, it is unknown if the Shapley weights can be factored into independent distributions for each modality -- a crucial requirement for the cross-modal estimator introduced in \cref{sec:cross-modal}.
\end{remark}

\method-$p$ explanations of VLEs offer a unique perspective on interpreting image–text similarity predictions.
Notably, the resulting second-order interaction explanation can always be simplified into a first-order attribution explanation and visualized as a traditional saliency map over input tokens.

\begin{theorem}[First-order conversion]\label{thm:first-order}
    For $i \in N_\img \cup N_\txt$ and $\expl^{\method\text{-}p}$, the first-order attribution values are given by $\expl_i + p \sum_{j\in N_\img \cup N_\txt: j\neq i} \expl_{\{i,j\}}$, which are the weighted Banzhaf values of $\hat\nu$.
\end{theorem}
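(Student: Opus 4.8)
The plan is to recognize the claimed first-order attribution as the weighted Banzhaf value of the $2$-additive game $\hat\nu_\expl$ induced by $\expl^{\method\text{-}p}$, and to obtain the stated closed form by a direct marginal-contribution computation. Recall that the weighted Banzhaf value of a game $w$ at player $i$ is its expected marginal contribution when every other player is activated independently with probability $p$, i.e. $\psi_i^p(w) \coloneq \bbE_{S \sim \bbP_p^{(-i)}}\big[w(S \cup \{i\}) - w(S)\big]$, where $S$ ranges over $(N_\img \cup N_\txt) \setminus \{i\}$ and $\bbP_p^{(-i)}$ is the product of Bernoulli$(p)$ measures on those coordinates (the coefficients $p^{\vert S\vert}(1-p)^{n_\img+n_\txt-1-\vert S\vert}$ sum to one, so this is a genuine expectation). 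This is the first-order, value-level counterpart of the interaction index defining $\expl^{\method\text{-}p}$ (cf. its cardinal-probabilistic characterization), so establishing the identity reduces to evaluating $\psi_i^p(\hat\nu_\expl)$.

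First I would compute the marginal contribution of $i$ to an arbitrary coalition $S \subseteq (N_\img \cup N_\txt) \setminus \{i\}$. Substituting the $2$-additive form $\hat\nu_\expl(M) = \expl_0 + \sum_{k \in M}\expl_k + \sum_{\{k,l\}\subseteq M}\expl_{\{k,l\}}$ into $\hat\nu_\expl(S\cup\{i\}) - \hat\nu_\expl(S)$, the constant $\expl_0$ cancels; the linear part contributes only the single term $\expl_i$ (since $i \notin S$); and in the quadratic part the only surviving pairs are those newly formed with $i$, namely $\{i,j\}$ for $j \in S$. Hence the marginal contribution equals $\expl_i + \sum_{j \in S}\expl_{\{i,j\}}$, depending on $S$ only through which neighbours of $i$ it contains.

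Next I would take the expectation over $S \sim \bbP_p^{(-i)}$. By linearity and independence, $\bbE\big[\sum_{j \in S}\expl_{\{i,j\}}\big] = \sum_{j \neq i}\bbP(j \in S)\,\expl_{\{i,j\}} = p\sum_{j \neq i}\expl_{\{i,j\}}$, giving $\psi_i^p(\hat\nu_\expl) = \expl_i + p\sum_{j\neq i}\expl_{\{i,j\}}$, as claimed.

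The point deserving care, rather than the routine algebra, is the interpretation that ties this quantity to a genuine first-order explanation: because $\hat\nu_\expl$ is truncated at order two, its weighted Banzhaf value carries no higher-order remainder, so the collapse $\expl_i + p\sum_{j\neq i}\expl_{\{i,j\}}$ is an \emph{exact} value rather than an approximation. If instead one prefers to define the first-order attribution as the minimizer of a first-order restriction of $\mathfrak{F}_p$ applied to $\hat\nu_\expl$, the identity still holds through the weighted least-squares characterization of the weighted Banzhaf value (the analytic solution known for $p = 0.5$), which I would invoke to equate the regression optimum with $\psi_i^p$.
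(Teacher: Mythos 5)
Your proof is correct, but it takes a genuinely different route from the paper's. The paper works entirely inside the least-squares approximation framework: it writes $\hat\nu_\expl$ in its M\"obius transform (whose coefficients are exactly $\expl_0$, $\expl_i$, $\expl_{\{i,j\}}$, with all higher-order coefficients vanishing by 2-additivity), then invokes Marichal--Mathonet's Proposition~3 (weighted Banzhaf values are the optimal first-order approximation of a game) together with their Proposition~6 (the M\"obius-coefficient formula for the best $k$-th order approximation, specialized to $k=1$) to read off $\expl_i + p\sum_{j\neq i}\expl_{\{i,j\}}$. You instead take the probabilistic characterization of the weighted Banzhaf value --- expected marginal contribution under independent Bernoulli$(p)$ participation --- and compute it directly on the 2-additive game: your marginal contribution $\expl_i + \sum_{j\in S}\expl_{\{i,j\}}$ and the expectation step are both correct, and this makes the appearance of the factor $p$ completely transparent, since it is just $\bbP(j\in S)$. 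The trade-off is where the external citation sits: the paper's route identifies ``first-order attribution'' with the $\mathfrak{F}_p$-optimal first-order approximation for free, because that is how Marichal--Mathonet define the index; your route gets the closed form by elementary means but must still invoke, without proof, the equivalence between the probabilistic and the weighted-least-squares characterizations of the weighted Banzhaf value --- which you do acknowledge in your final paragraph, though the parenthetical there is slightly off (the Hammer--Holzman analytic solution is specific to $p=0.5$; the general-$p$ equivalence is again a Marichal--Mathonet result). Since both arguments ultimately lean on the same source, neither is fully self-contained; yours is the more elementary and instructive computation, while the paper's stays native to the faithfulness framework in which the theorem's notion of ``first-order attribution'' is actually defined.
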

We defer the proofs to Appendix~\ref{app:proofs}.
While \method-$p$ can be computed analytically, it still requires $2^{n_\img+n_\txt}$ mask evaluations.
We thus introduce a model-agnostic estimator for \methodx-$p$ by estimating $\mathfrak{F}_p$ via Monte Carlo integration with sampled masks and optimizing the 
least-squares objective.

\begin{definition}[Model-agnostic estimator]\label{def:model-agnostic-estimator}
Let $M^{(1)},\dots,M^{(m)} \overset{\text{iid}}{\sim} \bbP_p$. 
The model-agnostic estimator is then $\hat\expl^{\method\text{-}p} \coloneq \argmin_\expl \hat{\mathfrak{F}}^{(m)}_p(\nu,\hat\nu_\expl)$, where $\hat{\mathfrak{F}}^{(m)}_p(\nu,\hat\nu_\expl) \coloneq \frac 1 m\sum_{\ell =1}^m\big(\nu(M^{(\ell)})-\hat\nu_\expl(M^{(\ell)})\big)^2$.
\end{definition}

For $p=0.5$, this estimator reduces to Faith-Banzhaf \citep{tsai2023faith} -- a variant of KernelSHAP~\citep{lundberg2017unified}.

\subsection{Cross-modal sampling with \texorpdfstring{$\boldsymbol{p}$}{p}-weighted masking}\label{sec:cross-modal}

We observe that the model-agnostic estimator uses naive sampling over masks $M \sim \bbP_p$, which becomes prohibitive for models with a large number of input tokens.
Since every token is independently active with probability $p$, we propose to separately sample $m_\img$ subsets for the image and $m_\txt$ subsets for the text modality, i.e. to decompose $\bbP_p = \bbP_{p,\img} \otimes \bbP_{p,\txt}$, and sample independently.
This allows for generating all possible combinations, obtaining a novel estimator of $\mathfrak{F}_p$.
\begin{definition}[Cross-modal estimator]\label{def:cross-modal-estimator}
Let $M^{(1)}_\img,\dots,M_\img^{(m_\img)} \overset{\text{iid}}{\sim} \bbP_{p,\img}$ and $M^{(1)}_\txt,\dots,M^{(m_\txt)}_\txt \overset{\text{iid}}{\sim} \bbP_{p,\txt}$.
We define the cross-modal estimator as $\hat\expl^{\method\text{-}p} \coloneq \argmin_\expl \hat{\mathfrak F}_p^{(m_\img,m_\txt)}(\nu,\hat\nu_\expl)$, where
\begin{align*}
    \hat{\mathfrak F}_p^{(m_\img,m_\txt)}(\nu,\hat\nu_\expl) \coloneq \frac{1}{m_\img \cdot m_\txt} \sum_{\ell_\img=1}^{m_\img} \sum_{\ell_\txt=1}^{m_\txt} \big(\nu(M_\img^{(\ell_\img)} \cup M_\txt^{(\ell_\txt)})-\hat\nu_\expl(M_\img^{(\ell_\img)} \cup M_\txt^{(\ell_\txt)})\big)^2.
\end{align*}
\end{definition}
Crucially, $\hat{\mathfrak F}_p^{(m_\img,m_\txt)}$ can be computed efficiently because $f$ independently encodes images for $m_\img$ samples with $f_\img$ and texts for $m_\txt$ samples with $f_\txt$ (cf. Equation~\ref{eq:vle}).
Then, similarity predictions are computed in a vectorized manner between all the cross-modal combinations of masked inputs.
In theory, given $m = m_\img + m_\txt$ samples, the model-agnostic estimator obtains $m$ game values (similarity predictions), while our proposed cross-modal estimator obtains $m_\img \cdot m_\txt \gg m$.
The empirically observed computational speedup, which we analyze in Section~\ref{subsec:timing}, can depend on the model's batch size (closely related to the GPU VRAM hardware).
Note that reusing the samples in text and image modalities inflicts dependence between them, and the following result establishes important theoretical properties.
\begin{theorem}\label{thm:cross-modal-estimator}
    The cross-modal estimator $\hat{\mathfrak F}_p^{(m_\img,m_\txt)}$ is unbiased, and its variance is bounded by
    \begin{align*}
        \bbV\big[\hat{\mathfrak F}_p^{(m_\img \cdot m_\txt)}(\nu,\hat\nu_\expl)\big]\leq\bbV\big[\hat{\mathfrak F}_p^{(m_\img,m_\txt)}(\nu,\hat\nu_\expl)\big] \leq \frac{m_\img+m_\txt}{m_\img \cdot m_\txt} \bbV
        \big[\big(\nu(M_\img \cup M_\txt)-\hat\nu_\expl(M_\img \cup M_\txt)\big)^2\big].
    \end{align*}
\end{theorem}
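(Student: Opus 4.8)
The plan is to reduce the claim to elementary variance--covariance bookkeeping for an average of identically distributed but dependent summands. Throughout, abbreviate $g(A,B) \coloneq \big(\nu(A \cup B) - \hat\nu_\expl(A \cup B)\big)^2$ for masks $A \subseteq N_\img$ and $B \subseteq N_\txt$, and write $\bbV[g]$ for the right-hand variance $\bbV\big[g(M_\img, M_\txt)\big]$. With this notation the cross-modal estimator is $\hat{\mathfrak F}_p^{(m_\img,m_\txt)} = \tfrac{1}{m_\img m_\txt}\sum_{i=1}^{m_\img}\sum_{j=1}^{m_\txt} g(M_\img^{(i)}, M_\txt^{(j)})$, while $\hat{\mathfrak F}_p^{(m_\img \cdot m_\txt)}$ denotes the model-agnostic estimator built from $m_\img m_\txt$ fully independent pairs, whose variance is $\tfrac{1}{m_\img m_\txt}\bbV[g]$. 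Unbiasedness is immediate: since $\bbP_p = \bbP_{p,\img}\otimes\bbP_{p,\txt}$, every pair $(M_\img^{(i)}, M_\txt^{(j)})$ has law $\bbP_p$, so $\bbE\big[g(M_\img^{(i)}, M_\txt^{(j)})\big] = \mathfrak F_p$ for all $(i,j)$, and linearity of expectation yields $\bbE\big[\hat{\mathfrak F}_p^{(m_\img,m_\txt)}\big] = \mathfrak F_p$.

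For the variance I would expand the double average into a fourfold sum of covariances,
\[
\bbV\big[\hat{\mathfrak F}_p^{(m_\img,m_\txt)}\big] = \frac{1}{(m_\img m_\txt)^2}\sum_{i,i'=1}^{m_\img}\sum_{j,j'=1}^{m_\txt}\cov\big(g(M_\img^{(i)},M_\txt^{(j)}),\,g(M_\img^{(i')},M_\txt^{(j')})\big),
\]
and classify the index quadruples by which of the image/text indices coincide. When $i \neq i'$ and $j \neq j'$ the four masks are mutually independent under the product law, so the covariance vanishes. The diagonal $i=i'$, $j=j'$ contributes $m_\img m_\txt$ copies of $\bbV[g]$; the terms sharing only the image index contribute $m_\img m_\txt(m_\txt-1)$ copies of $C_\img \coloneq \cov\big(g(M_\img,M_\txt),\,g(M_\img,M_\txt')\big)$; and symmetrically the terms sharing only the text index contribute $m_\txt m_\img(m_\img-1)$ copies of $C_\txt \coloneq \cov\big(g(M_\img,M_\txt),\,g(M_\img',M_\txt)\big)$. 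Collecting terms gives the closed form
\[
\bbV\big[\hat{\mathfrak F}_p^{(m_\img,m_\txt)}\big] = \frac{1}{m_\img m_\txt}\bbV[g] + \frac{m_\txt-1}{m_\img m_\txt}\,C_\img + \frac{m_\img-1}{m_\img m_\txt}\,C_\txt.
\]

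The crux is to control $C_\img$ and $C_\txt$, for which I would invoke the law of total covariance conditioned on the shared mask. Given $M_\img$, the masks $M_\txt$ and $M_\txt'$ are i.i.d.\ and independent, so the conditional covariance is zero and $C_\img = \bbV\big[\bbE[g(M_\img,M_\txt)\mid M_\img]\big]$ is a genuine variance, hence $C_\img \ge 0$; the same argument gives $C_\txt = \bbV\big[\bbE[g(M_\img,M_\txt)\mid M_\txt]\big] \ge 0$. Nonnegativity of both covariances immediately yields the lower bound $\tfrac{1}{m_\img m_\txt}\bbV[g] \le \bbV\big[\hat{\mathfrak F}_p^{(m_\img,m_\txt)}\big]$, i.e.\ that reusing samples can only increase variance relative to the fully independent estimator. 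For the upper bound, the total-variance decomposition $\bbV[g] = \bbE\big[\bbV[g\mid M_\img]\big] + \bbV\big[\bbE[g\mid M_\img]\big]$ shows $C_\img \le \bbV[g]$, and likewise $C_\txt \le \bbV[g]$; substituting into the closed form gives
\[
\bbV\big[\hat{\mathfrak F}_p^{(m_\img,m_\txt)}\big] \le \frac{1 + (m_\txt-1) + (m_\img-1)}{m_\img m_\txt}\,\bbV[g] = \frac{m_\img+m_\txt-1}{m_\img m_\txt}\,\bbV[g] \le \frac{m_\img+m_\txt}{m_\img m_\txt}\,\bbV[g],
\]
which is the stated bound (in fact marginally sharper).

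I expect the only delicate point to be the combinatorial bookkeeping in the fourfold expansion: getting the four cases and their exact multiplicities right, and arguing carefully that the ``all-distinct'' case vanishes because the product structure $\bbP_p = \bbP_{p,\img}\otimes\bbP_{p,\txt}$ makes distinct image draws and distinct text draws mutually independent. Once the expansion is organized, the two bounds are each a one-line consequence of the total-covariance and total-variance identities, so no genuine analytic obstacle remains.
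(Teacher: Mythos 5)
Your proof is correct and follows essentially the same route as the paper's: unbiasedness via the product law $\bbP_p = \bbP_{p,\img}\otimes\bbP_{p,\txt}$, a fourfold covariance expansion classified by which image/text indices coincide (with identical multiplicities), and the law of total variance/covariance to control the two cross terms. The only difference is cosmetic: you parameterize the closed form by the covariances $C_\img = \bbV\big[\bbE[g\mid M_\img]\big]$ and $C_\txt = \bbV\big[\bbE[g\mid M_\txt]\big]$, bounded between $0$ and $\bbV[g]$, whereas the paper uses the dual quantities $\tau_{\txt\mid\img} = \bbE\big[\bbV[g\mid M_\img]\big]$ and $\tau_{\img\mid\txt} = \bbE\big[\bbV[g\mid M_\txt]\big]$ (equivalent via the law of total variance), and your formulation yields the lower bound directly for general $m_\img \neq m_\txt$, where the paper's final display specializes to $m_\img = m_\txt$.
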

In other words, \cref{thm:cross-modal-estimator} shows that the variance of the cross-modal estimator is at most of the same order as the variance of the model-agnostic estimator with $m \approx m_\img \approx m_\txt$ samples.
Moreover, the variance is at least the variance of the model-agnostic estimator with $m_\img \cdot m_\txt$ independent samples.
In practice, we use the cross-modal estimator in higher budget scenarios to speed up the computation by decreasing the number of effective model inferences.

\subsection{Large-scale adaptations for FIxLIP explanations} 

In contrast to first-order explanations, the explanation basis of \cref{def:interaction-explanation} grows quadratically as $ \vert \mathcal{B} \vert = 1+n_\img+n_\txt + \binom{n_\img+n_\txt}{2} \approx (n_\img + n_\txt)^2$.
For example, a ViT-B/16 version of CLIP with $196$ image and $30$ text input tokens~(players) results already in $25\ 425$ interactions.
From a practical perspective, we thus consider two heuristics as part of our complete methodology: 
\textbf{(1)}~We employ a \emph{two-step filtering approach} for prioritizing interactions to approximate when explaining games with a large number of players.
A default is to pick a clique subset with the highest (absolute) first-order attributions and compute interactions between them, e.g. top-$72$ with $\binom{72}{2} = 2 \ 556$ interactions.
Another approach is to include only cross-modal interactions in the approximation, e.g. $196 \times 30 = 5\ 880$.
\textbf{(2)}~We apply a simple \emph{greedy subset selection} algorithm on explanation $\expl$ to find subgraphs $M \subseteq N_\img \cup N_\txt$ of the highest and lowest sums of $\hat\nu_\expl(M)$, which we use in evaluating~(Section~\ref{subsec:insertion_deletion}) and visualizing~(Section~\ref{subsec:visual_explanations}) explanations.
Further details are in Appendix~\ref{app:implementation_details}.

\section{Evaluation Metrics for Interaction Explanations of VLE Predictions}\label{sec:metrics}
In this section, we derive three evaluation metrics for explanations that may include second-order interactions.
First, we evaluate the faithfulness of the approximation $\hat\nu_\expl$ to $\nu$, since the basis of explaining VLEs is the masking operator that defines the \methodx game~$\nu$.
Note that conventionally this is done with an $R^2$ coefficient~\citep{kang2024learning,muschalik2024shapiq} or mean squared error~\citep{fumagalli2023shapiq,enouen2025instashap,musco2025provably}, but we need to rely on rankings to compare attribution methods like saliency maps. 
Second, we generalize the area between insertion and deletion curves~\citep[AID,][]{naofumi2023deletion,zhao2024gradeclip}, aka remove least/most important first~\citep{bluecher2024decoupling}, a popular faithfulness metric for token attribution methods.
Finally, we extend the well-established pointing game evaluation of faithfulness from the image classification literature~\citep{bohle2021convolutional,bohle2024bcos,arya2024bcosification,moeller2025explaining}.
In the latter two metrics, we need to assume $\expl_{\{i,j\}}=0$ for the first-order attribution explanations without interactions.

We validate the faithfulness to the \methodx game $\nu$ across sampled subsets.
To this end, we compare the rankings of $\nu$ and $\hat\nu$, since related explanation methods are not normalized to estimate $\nu$ correctly.
\begin{definition}[$p$-faithfulness correlation]\label{def:p-faithfulness-correlation}
    We define $p$-faithfulness correlation as the Spearman's rank correlation computed between $\nu$ and $\hat\nu_\expl$ evaluated for $m$ masks sampled from $\bbP_{p}$ (cf. \cref{rem:p-faithfulness}) as $$\mu_{\corr}(\expl;p) \coloneq \underset{M^{(1)},\dots,M^{(m)} \overset{\mathrm{iid}}{\sim} \bbP_p}{\mathrm{correlation}}\big(\nu(M^{(i)}),\hat\nu_\expl(M^{(i)})\big).$$
\end{definition}
The rank correlation metric $\mu_{\mathrm{corr}}$ yields insights into the general capability of recovering the ranking of similarity scores of uniformly masked inputs based on $\bbP_p$.

Beyond capturing the general $p$-faithfulness, we derive an insertion/deletion visualization with a metric measuring the explanation's ability to identify token subsets (masks) of high and low similarity as evaluated by the game (model).
\begin{definition}[Area between insertion/deletion curves (AID)]\label{def:insertion-deletion-curves}
For each size $k=1,\dots,n_\img+n_\txt$, let 
$M_{\expl,\min,k} \coloneq  \argmin_{M \subseteq N_\img \cup N_\txt: \vert M \vert = k} \hat\nu_\expl(M)$ and $M_{\expl,\max,k} \coloneq \argmax_{M \subseteq N_\img \cup N_\txt: \vert M \vert = k} \hat\nu_\expl(M)$.
Insertion and deletion curves are computed as $\mathcal C_{\mathrm{insert}}(\expl) \coloneq \big\{\big(k, \nu(M_{\expl,\max,k})\big)\big\}_{k=1,\dots,n_\img+n_\txt}$, and $\mathcal C_{\mathrm{delete}}(\expl) \coloneq \big\{\big(k, \nu(M_{\expl,\min,n_\img + n_\txt - k})\big)\big\}_{k=1,\dots,n_\img+n_\txt}$, respectively.
The metric is then defined as 
\begin{align*}
     \mu_{\mathrm{AID}}(\expl) \coloneq \int \mathcal C_{\mathrm{insert}}(\expl) - \int \mathcal C_{\mathrm{delete}}(\expl) = \sum_{k=1}^{n_\img+n_\txt} \big({\nu(M_{\expl,\max,k})} -\nu(M_{\expl,\min,k})\big).
\end{align*}
\end{definition}
$\mu_{\mathrm{AID}}$ measures the difference between model predictions when including and excluding the most relevant tokens. 
For first-order methods, $M_{\expl,\min,k}$ and $M_{\expl,\max,k}$ are directly found by ranking $\expl$.
\begin{proposition}\label{prop:ranking}
For token attribution values, i.e. when $\forall_{i,j}\;\expl_{\{i,j\}} \equiv 0$, such as weighted Banzhaf values, $M_{\expl,\max,k}$ and $M_{\expl,\min,k}$  are given by the top-$k$ and bottom-$k$ coefficients of $\expl$, respectively.
Consequently, $\mathcal C_{\mathrm{delete}}$ is found by deleting $M_{\expl,\max,k}$, i.e. $M_{\expl,\min,n_\img + n_\txt - k} = (N_\img \cup N_\txt) \setminus M_{\expl,\max,k}$. 
\end{proposition}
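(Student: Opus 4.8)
The plan is to exploit the separable structure of $\hat\nu_\expl$ once the interaction terms vanish. First I would substitute $\expl_{\{i,j\}}\equiv 0$ into the definition of $\hat\nu_\expl$, which collapses it to the additive form $\hat\nu_\expl(M)=\expl_0+\sum_{i\in M}\expl_i$. Since $\expl_0$ is added to every subset regardless of $M$, it is irrelevant to both the $\argmax$ and the $\argmin$ over subsets of a fixed cardinality, so the two optimization problems reduce to maximizing (resp.\ minimizing) the linear objective $\sum_{i\in M}\expl_i$ subject to $\vert M\vert = k$.

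The key step is to argue that a size-$k$ subset maximizing a sum of per-token weights $\expl_i$, with no cross-terms coupling the choices, is exactly the set of the $k$ largest weights. I would make this precise with a standard exchange argument: if some optimal set $M^\star$ omitted an index $a$ with $\expl_a$ strictly larger than $\expl_b$ for some included $b\in M^\star$, then swapping $b$ for $a$ would strictly increase the objective while preserving $\vert M^\star\vert=k$, contradicting optimality. Hence $M_{\expl,\max,k}$ consists of the top-$k$ coefficients of $\expl$, and the symmetric argument gives $M_{\expl,\min,k}$ as the bottom-$k$ coefficients.

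For the complement identity I would introduce the total $S\coloneq\sum_{i\in N_\img\cup N_\txt}\expl_i$ and use $\sum_{i\in M}\expl_i = S - \sum_{i\in (N_\img\cup N_\txt)\setminus M}\expl_i$. Minimizing the left-hand side over $\vert M\vert = n_\img+n_\txt-k$ is therefore equivalent to maximizing $\sum_{i\in M^c}\expl_i$ over complements of size $k$, whose maximizer is $M_{\expl,\max,k}$ by the previous step. This yields $M_{\expl,\min,n_\img+n_\txt-k}=(N_\img\cup N_\txt)\setminus M_{\expl,\max,k}$, showing that $\mathcal C_{\mathrm{delete}}$ is obtained precisely by deleting the most relevant tokens, as claimed.

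There is essentially no analytical obstacle here; the statement is a direct consequence of separability. The only point requiring care is tie-breaking: when several coefficients coincide, the maximizer and minimizer need not be unique. I would note that ties leave the optimal values $\hat\nu_\expl(M_{\expl,\max,k})$ and $\hat\nu_\expl(M_{\expl,\min,k})$ unchanged, so that $\mu_{\mathrm{AID}}$ remains well defined, and adopt a fixed consistent tie-breaking rule so that the complement relation holds exactly.
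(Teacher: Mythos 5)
Your proposal is correct and follows essentially the same route as the paper's proof: reduce $\hat\nu_\expl$ to the additive form, observe that top-$k$/bottom-$k$ coefficients optimize the linear objective at fixed cardinality, and obtain the deletion-curve identity from the fact that the top-$k$ set is the complement of the bottom-$(n_\img+n_\txt-k)$ set. Your additions -- the explicit exchange argument, the total-sum identity for the complement, and the tie-breaking remark -- merely make rigorous steps the paper states directly, so they do not constitute a different approach.
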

Consequently, $\mathcal C_{\mathrm{insert}}$ and $\mathcal C_{\mathrm{delete}}$ generalize insertion and deletion curves to second-order interaction explanations.
Note that for token attribution values, we have $M_{\expl,\max,k} \subset M_{\expl,\max,k+1}$, and $M_{\expl,\min,k} \subset M_{\expl,\min,k+1}$, which does not hold when modeling interaction terms.
We give a further description of the insertion/deletion process on a single image--text input in~Appendix~\ref{app:insertion-deletion-curves}, Figure~\ref{fig:appendix_insertion_deletion}.

Lastly, we propose to evaluate an explanation $\expl$ with designed pseudo ground-truth data.
We measure the overlap between an explanation of inputs crafted as four combined images with a multi-object text prompt, and its ``ground-truth'' assumed prior.
We show a visual example in Appendix~\ref{app:pointing-game-recognition}, Figure~\ref{fig:appendix_pointing_game}
\begin{definition}[Pointing game recognition (PGR)]\label{def:pointing-game-recognition}
For $\expl$, let $\expl_{\mathrm{in},k}$ and $\expl_{\mathrm{out},k}$ denote the values of interactions belonging to, and not belonging to, object $k$ -- a text token with its corresponding image patches.
We denote the positive and negative interactions by $\expl_{>0},\expl_{<0}$. The metric is then defined as
    \begin{align*}
\mu_{\mathrm{PGR}}(\expl;N_\txt) \coloneq \frac
{\sum_{k\in N_\txt}\big(\Vert\expl_{\mathrm{in},k,>0}\Vert_1 + \Vert\expl_{\mathrm{out},k,<0}\Vert_1\big)}
{\sum_{k\in N_\txt}\big(\Vert\expl_{\mathrm{in},k}\Vert_1 + \Vert\expl_{\mathrm{out},k}\Vert_1\big)} 
\in [0, 1], \text{ where } \Vert \expl \Vert_1 \coloneq \sum_{i,j} \vert \expl_{\{i,j\}} \vert.
    \end{align*}
\end{definition}
$\mu_{\mathrm{PGR}}$ quantifies the ratio of absolute values of ``correctly'' identified cross-modal interaction terms to total interaction terms. 
Thereby, ``correctly'' identified cross-modal interactions refer to positive and negative interactions that are associated with and without object $k$ in an image, respectively.
We use $\mu_{\mathrm{PGR}}$ as a sanity check that interaction explanations are essential for explaining VLEs as compared to alternative attribution approaches. 
Scoring high PGR values across multiple objects present in an image--text pair denotes that an explanation method can show the model distinguishes between them.

\section{Experiments}\label{sec:experiments}

\textbf{Setup.} 
In experiments, we empirically validate the performance of \methodx with three metrics defined in Section~\ref{sec:metrics}, measure its computational efficiency, and demonstrate its utility in visual explanation of~VLEs. 
We mainly use the openly available pre-trained CLIP models~\citep{radford2021learning} of two sizes: ViT-B/32 with $7\times7$ image patches and ViT-B/16 with $14\times14$.
Moreover, we demonstrate the broader applicability of \methodx to explain SigLIP~\citep{zhai2023siglip} and SigLIP-2~\citep{tschannen2025siglip2} up to the ViT-L/16 variant with $16\times16$ patches.
We rely on two openly available datasets commonly used in explainability research: MS COCO~\citep{lin2014microsoft} and ImageNet-1k~\citep{deng2009imagenet}; the latter specifically to design the pointing game evaluation considering zero-shot classification.

In quantitative evaluation, we compare \methodx to a few representative baselines: the first attribution method for CLIP abbreviated as GAME~\citep{chefer2021generic}, a state-of-the-art first-order attribution method Grad-ECLIP~\citep{zhao2024gradeclip}, and a recently released second-order interaction method exCLIP~\citep{moeller2025explaining}.
For comparisons of Grad-ECLIP and exCLIP with previous baselines, refer to the appropriate benchmarks~\citep{zhao2024gradeclip,moeller2025explaining}.
Figure~\ref{fig:visual_comparison_against_baselines} conceptually compares \methodx to baselines using the image--text input example from Figure~\ref{fig:abstract}.
Furthermore, we naturally demonstrate the improvement between \methodx and first-order Shapley/Banzhaf values. 
In that, we effectively scale \methodx to approximate second-order explanations with a budget of over $10^6$ model inferences, which far exceeds related work.
For \methodx-$p$, we use the cross-modal estimator with a budget of $2^{21}$, whereas \methodx with Shapley interactions uses the model-agnostic estimator with a budget of $2^{17}$, yielding approximately similar runtime.
We mainly experiment with $p\in\{0.3, 0.5, 0.7\}$; note that there is no computational overhead for different $p$ settings.
Further details for the experimental setup are provided in Appendix~\ref{app:experimental_setup}.

\begin{figure}
    \centering
    \includegraphics[width=\linewidth]{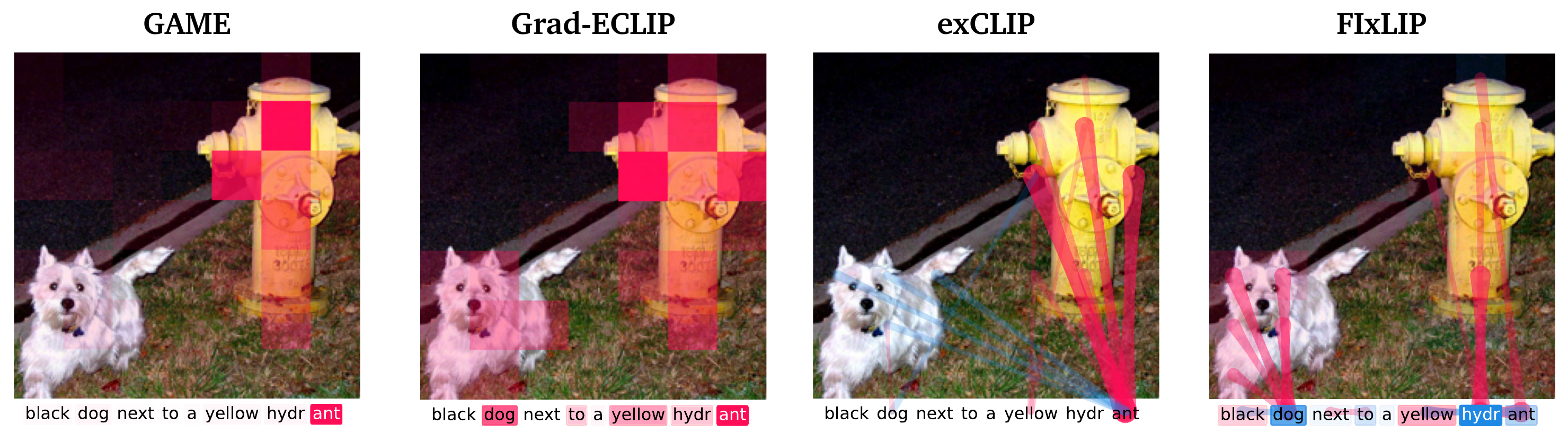}
    \caption{\textbf{Visual comparison between \methodx and baselines.}
    First-order attribution methods, e.g. GAME and Grad-ECLIP, lack the tools to faithfully explain complex similarity predictions of vision--language encoders like CLIP.
    Notably, in this example, the text token \texttt{ant}\ant\xspace is the most~important for the similarity prediction.
    One of the differences from exCLIP is that we include intra-modal and main effects in the approximation, which are crucial for obtaining faithful interaction explanations.
    }
    \label{fig:visual_comparison_against_baselines}
\end{figure}

\subsection{\methodx outperforms baselines as measured with insertion/deletion curves}\label{subsec:insertion_deletion}

Figure~\ref{fig:insertion_deletion_clip32} shows insertion/deletion results for different explanation methods, where each line and metric value is an average ($\pm$ sd.) over 1000 inputs.
We observe that \methodx faithfully recovers the nonlinear importance rankings of token subsets (see Appendix Figure~\ref{fig:appendix_insertion_deletion} for a visual example).
It not only finds the most important tokens whose deletion results in a significant drop in similarity (y-axis), but also \textbf{finds the least important tokens whose deletion may even result in the prediction's increase}, contrary to gradient-based methods.
The general conclusion is consistent for SigLIP-2 and a larger ViT-B/16 model (refer to Appendix~\ref{app:additional_results} for additional results), albeit the method's faithfulness drops when increasing the size of input tokens, where further scaling to even higher image resolutions is a natural future work direction. 
As designed, increasing the masking weight to $p = 0.7$ leads to an improved faithfulness in the $0$--$50\%$ range of input deleted ($100$--$50\%$ inserted), while decreasing to $p = 0.3$ improves faithfulness with $50$--$100\%$ input deleted ($50$--$0\%$ inserted).
Note that, in theory, all of the methods can obtain negative normalized values on the y-axis.
We think that exCLIP fails to recover the appropriate ranking because it only approximates cross-modal interactions between tokens from the two modalities, omitting first-order effects and intra-modal interactions in the process (effectively constructing a bipartite weighted graph without weights in nodes).

\begin{figure}[t]
    \centering
    \includegraphics[width=0.99\linewidth]{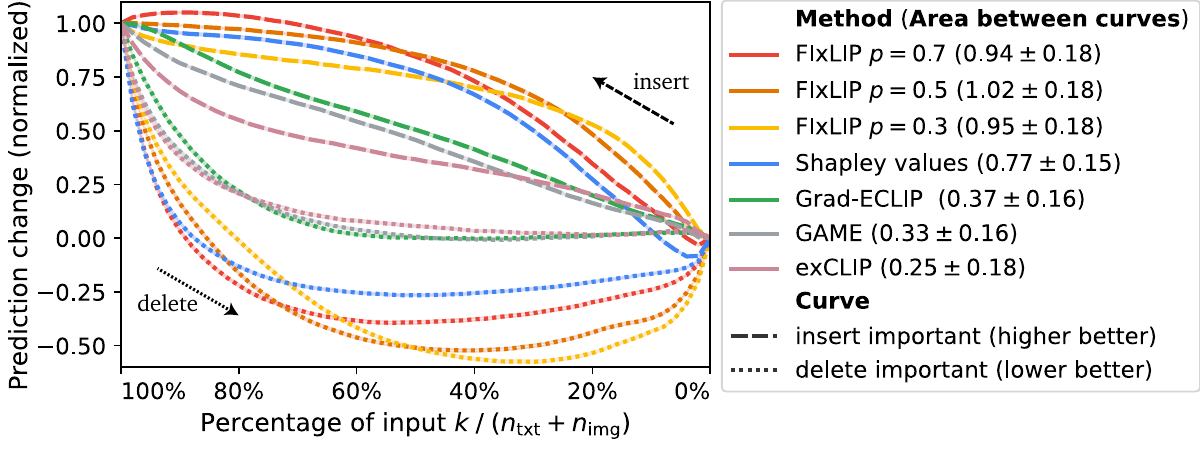}
    \caption{
    \textbf{Insertion/deletion curves for CLIP (ViT-B/32) on MS COCO.}  
    AID score (higher is better) for \methodx against alternative explanation methods, where a random baseline scores $0$.
    The y-axis is normalized between the model's prediction on the original input ($100\%$) and the fully removed one ($0\%$), where negative values denote that the model is predicting the image--text inputs are unsimilar.
    It means the similarity prediction on a partially masked input is smaller than the prediction on the fully masked input.
    Methods such as Grad-ECLIP and exCLIP fail to recover nonlinear rankings of important tokens, while our method faithfully recovers the optimal subset explanation.
    Extended results for CLIP (ViT-B/16) and SigLIP-2 (ViT-B/32) are in Figures~\ref{fig:insertion_deletion_clip16}~\&~\ref{fig:insertion_deletion_siglip32}.
    }
    \label{fig:insertion_deletion_clip32}
\end{figure}

\subsection{First-order attribution methods fail to pass a sanity check with a pointing game}\label{subsec:pointing_game}

Table~\ref{tab:pointing_game_clip32} shows PGR results for different explanation methods, where each metric value is an average ($\pm$ sd.) over 500 inputs.
We observe that first-order methods, e.g. Banzhaf values, fail to discriminate between multiple objects in an image.
Saliency maps can only highlight the important part of image--text inputs without disentangling the complex relationship between each separate word in a caption and the corresponding image regions (Appendix Figure~\ref{fig:appendix_pointing_game}).
Second-order methods such as \methodx and exCLIP pass our proposed sanity check; the latter scores slightly higher, as it specializes in approximating cross-modal interactions.
The conclusion is consistent for CLIP (ViT-B/16).
We further analyze the potential application of a pointing game to evaluate different models in~Section~\ref{sec:demonstrating_siglip}.

\begin{table}[t]
    \caption{
    \textbf{Pointing game recognition for CLIP (ViT-B/32) on ImageNet-1k.} 
    PGR score (higher is better) for \methodx against alternative explanation methods, where a random baseline scores~$0.25$. 
    First-order methods, such as Grad-ECLIP and Shapley values, fail to distinguish between multiple objects at once, while second-order methods faithfully recover the appropriate explanation (up to the pointing game's irreducible non-optimality).
    Extended results for CLIP (ViT-B/16) are in Table~\ref{tab:pointing_game_clip16}.
    }
    \label{tab:pointing_game_clip32}
    \small
    \vspace{0.5em}
    \centering
    \begin{tabular}{lcccc}
        \toprule
         \textbf{Explanation Method} & \multicolumn{4}{c}{\textbf{Recognition} ($\uparrow$)} \\
         & 1 object & 2 objects & 3 objects & 4 objects \\
        \midrule
         GAME~\citep{chefer2021generic} & $.61_{\pm.12}$ & $.43_{\pm.03}$ & $.33_{\pm.02}$ & $.28_{\pm.01}$ \\
         Grad-ECLIP~\citep{zhao2024gradeclip} & $.68_{\pm.15}$ & $.45_{\pm.04}$ & $.33_{\pm.02}$ & $.28_{\pm.01}$ \\
         Shapley values & $.70_{\pm.11}$ & $.56_{\pm.06}$ & $.46_{\pm.05}$ & $.37_{\pm.04}$ \\
         Banzhaf values & $.64_{\pm.12}$ & $.52_{\pm.06}$ & $.43_{\pm.05}$ & $.35_{\pm.04}$ \\
         \cmidrule[0.25pt]{1-5}
         exCLIP~\citep{moeller2025explaining} & $.73_{\pm.20}$ & $.88_{\pm.08}$ & $.89_{\pm.06}$ & $.92_{\pm.05}$ \\
         \methodx (Shapley interactions) & $.83_{\pm.10}$ & $.82_{\pm.08}$ & $.84_{\pm.06}$ & $.86_{\pm.06}$ \\
         \methodx (w. Banzhaf interactions $p=0.3$) & $.78_{\pm.13}$ & $.78_{\pm.11}$ & $.80_{\pm.08}$ & $.81_{\pm.07}$ \\
         \methodx (w. Banzhaf interactions $p=0.5$) & $.81_{\pm.12}$ & $.80_{\pm.09}$ & $.81_{\pm.07}$ & $.83_{\pm.06}$ \\
         \methodx (w. Banzhaf interactions $p=0.7$) & $.83_{\pm.12}$ & $.81_{\pm.08}$ & $.83_{\pm.07}$ & $.85_{\pm.06}$ \\
        \bottomrule
    \end{tabular}
\end{table}

\subsection{\methodx recovers a faithful decomposition of the similarity function}\label{subsec:faithfulness}

Figure~\ref{fig:eval_faithfulness} demonstrates $p$-faithfulness results for different explanation methods, where each boxplot represents statistics aggregated from 1000 inputs. 
We confirm that \methodx optimizes faithfulness, which depends on the parameter $p$.
Weighted Banzhaf interactions allow for precise controllability of the faithfulness optimization, whereas Shapley interactions perform well on average.
Alternative attribution methods compute explanations unfaithful to the explained similarity function~($\mu_{\mathrm{corr}}\approx0.5$).
Extended results for different models and $p$ values are given in Appendix~\ref{app:additional_results}.

\begin{figure}[t]
    \centering
    \includegraphics[width=0.465\linewidth]{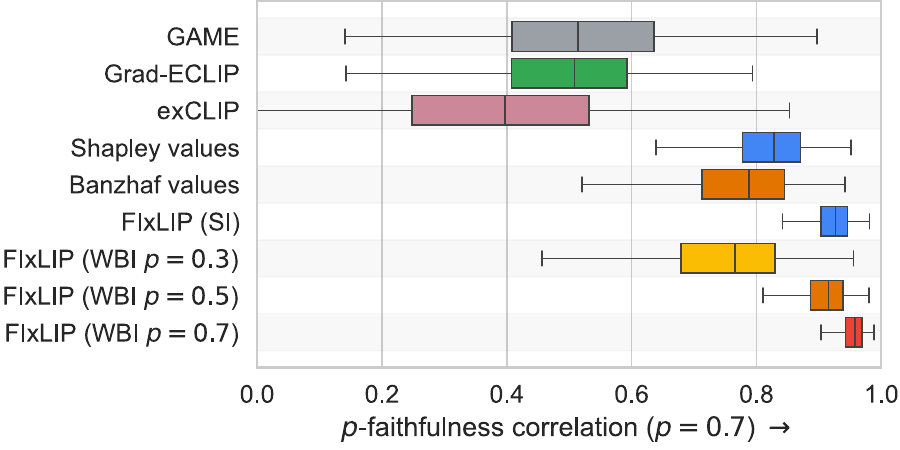}
    \includegraphics[width=0.465\linewidth]{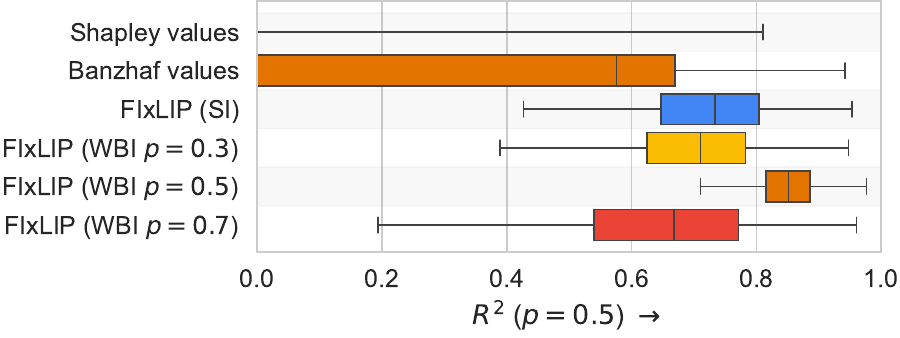}
    \caption{\textbf{$\boldsymbol{p}$-faithfulness correlation for CLIP (ViT-B/32) on MS COCO.} 
    Correlation for~different variants of \methodx against other explanation methods (\textbf{left}).
    Game-theoretical approaches~can also be evaluated with the $R^2$ coefficient (\textbf{right}).
    Extended results for CLIP (ViT-B/16) are in~Figure~\ref{fig:appendix_faithfulness}.}
    \label{fig:eval_faithfulness}
\end{figure}

\subsection{On the computational efficiency of the \methodx cross-modal estimator}\label{subsec:timing}
\begin{wrapfigure}{r}{0.49\textwidth}
    \vspace{-1.3em}
    \centering
    \includegraphics[width=\linewidth]{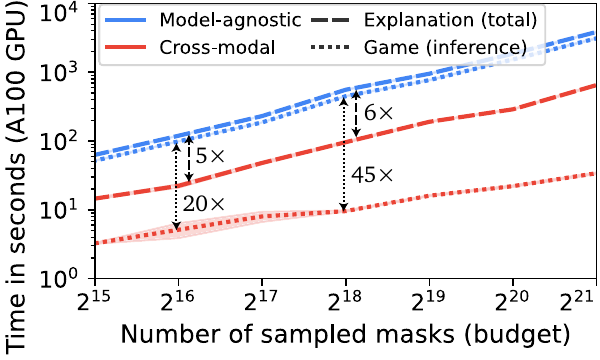}
    \caption{
    Computation time vs. budget for the \methodx explanation of SigLIP-2 (ViT-B/32), including game evaluations (model inference).
    }
    \label{fig:eval_time}
    \vspace{-1.5em}
\end{wrapfigure}
Figure~\ref{fig:eval_time} demonstrates that the \methodx cross-modal estimator achieves over $20\times$ speedup when considering model inference time, i.e. game evaluations, and about $5\times$ speedup when accounting for the entire explanation pipeline implemented in Python.
We did not necessarily optimize the latter, i.e. subset sampling, weighted least squares optimization, and other processing steps, expecting that the visible gap between the explanation and game time could be further decreased.
For context, related first-order attribution methods take about 1 second to compute; see Table 5 in~\citep{zhao2024gradeclip}, but note that it could be only for an image explanation without text attribution.

\subsection{Visual explanation of vision--language encoders}\label{subsec:visual_explanations}

We have already established that \methodx delivers state-of-the-art faithfulness performance across three diverse metrics.
One of the explanations' applications is to interpret the model's output function.
Figure~\ref{fig:viusal_explanations} demonstrates three types of visualizations that we envision can be used to broaden our understanding of VLEs.
In the interest of space, we provide further interesting examples in Appendix~\ref{app:additional_explanations}.
Specifically, we provide a comprehensive guide on interpreting interaction explanations like \methodx in Appendix~\ref{app:guide_on_interpreting}.
Figures~\ref{fig:appendix_pointing_game}~\&~\ref{fig:appendix_insertion_deletion} further compare \methodx to baselines in the context of evaluation metrics.
Figure~\ref{fig:appendix_compare_clip_siglip} shows a comparison between \methodx of CLIP and SigLIP-2 (ViT-B/32), while Figure~\ref{fig:appendix_compare_clip32_clip16} shows a comparison between the ViT-B/32 and ViT-B/16 versions of CLIP.

\begin{figure}
    \centering
    \includegraphics[width=\linewidth]{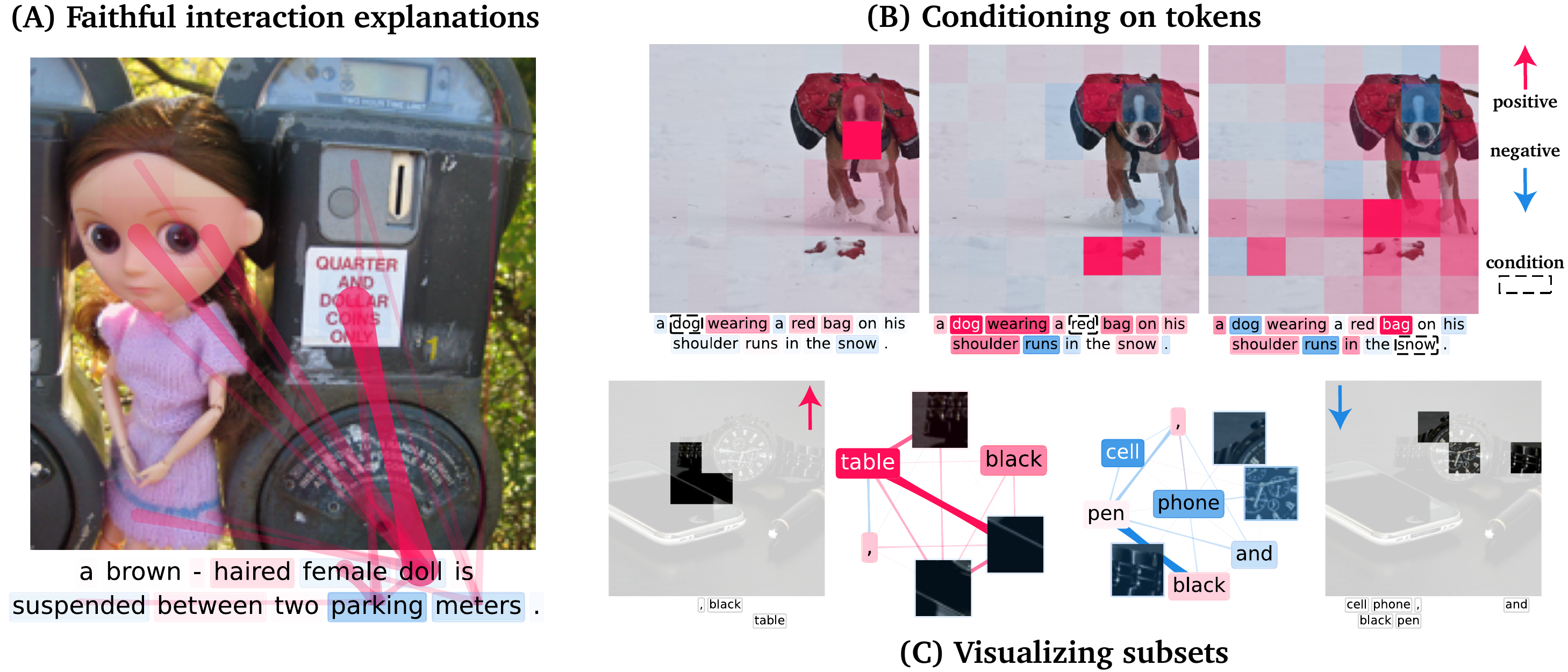}
    \caption{\textbf{\methodx facilitates various approaches to model understanding.} 
    \textbf{(A)} Interaction explanations allow answering the pivotal question: \emph{Why is it similar for the model?} 
    Here, the strongest interaction is between text token \texttt{doll} and image patch saying ``dollar''.
    One could say the model is right for the wrong reasons.
    \textbf{(B)} Each token can~be selected for conditioning to visualize as a heatmap only the interactions (edges) outgoing from it. 
    \textbf{(C)} A complete graph~can be traversed to find subsets of high positive and low negative similarity as approximated by the explanation.
    }
    \label{fig:viusal_explanations}
\end{figure}

\subsection{Model-agnostic comparison of different vision--language encoder architectures}\label{sec:demonstrating_siglip}
\begin{wraptable}{r}{0.62\textwidth}
    \vspace{-1.3em}
    \centering
    \caption{Pointing game results as measured with \methodx.}
    \label{tab:pointing_game_siglip}
    \small
    \begin{tabular}{@{\hspace{1.5pt}}llcccc@{\hspace{1.5pt}}}
        \toprule
         \textbf{Size} & \textbf{Model} & \multicolumn{4}{c}{\textbf{Recognition} ($\uparrow$)} \\
          & & 1 object & 2 objects & 3 obj. & 4 obj. \\
         \midrule
         \multirow{2}{*}{ViT-B/32} 
         & CLIP & $.83_{\pm.11}$ & $.81_{\pm.07}$ & $.82_{\pm.07}$ & $.85_{\pm.05}$ \\
         & SigLIP-2 & $.90_{\pm.07}$ & $.90_{\pm.05}$ & $.89_{\pm.05}$ & $.89_{\pm.04}$ \\
         \cmidrule[0.25pt]{1-6}
         \multirow{3}{*}{ViT-B/16} 
         & CLIP & $.81_{\pm.09}$ & $.81_{\pm.06}$ & $.81_{\pm.05}$ & $.82_{\pm.04}$ \\
         & SigLIP & $.80_{\pm.08}$ & $.82_{\pm.06}$ & $.84_{\pm.05}$ & $.84_{\pm.05}$ \\
         & SigLIP-2 & $.86_{\pm.07}$ & $.88_{\pm.04}$ & $.87_{\pm.04}$ & $.87_{\pm.03}$ \\
         \cmidrule[0.25pt]{1-6}
         \multirow{2}{*}{ViT-L/16} 
         & SigLIP & $.81_{\pm.08}$ & $.84_{\pm.05}$ & $.85_{\pm.04}$ & $.84_{\pm.04}$ \\
         & SigLIP-2 & $.80_{\pm.10}$ & $.85_{\pm.06}$ & $.84_{\pm.05}$ & $.85_{\pm.04}$ \\
        \bottomrule
    \end{tabular}
    \vspace{-1.5em}
\end{wraptable}
We use \methodx to compare different VLE architectures with PGR in Table~\ref{tab:pointing_game_siglip}.
\textbf{Surprisingly, \mbox{SigLIP-2} can be more faithfully explained with cross-modal interactions than CLIP}, for both model sizes.
We observe that, in general, smaller models (ViT-B/32) can be more faithfully explained than larger ones (ViT-L/16), which is consistent with prior work~\citep{bordt2023shapley,enouen2025instashap}.
For more visual comparisons, see Appendix~\ref{app:comparison_clip_siglip}.

\section{Discussion}\label{sec:conclusion}
As vision--language encoders are increasingly deployed in real-world applications, it becomes pivotal to ensure that their predictions are explainable.
To this end, we introduced faithful interaction explanations of CLIP and SigLIP models, offering a unique perspective on interpreting image–text~similarity predictions.
Moreover, we derived three evaluation criteria facilitating future work in this direction.

\textbf{Limitations and future work.} 
Our work faces three limitations, each with a clear path for future development.
Although \methodx allows scaling to larger models with hundreds of players and efficiently computing a million model inferences, improvements could be made to its practical implementation. 
Specifically, we envision exploring the use of sparse linear regression, applying the Archipelago framework~\citep{tsang2020how} to filter interactions for the approximation, and a non-greedy algorithm for a closer-to-optimal subset selection.
Second, the visual properties and usability of \methodx should be further studied as a potential direction in human-computer interaction research~\citep{rong2024towards}, see e.g. a user study of cross-modal interaction explanations by \citet{liang2023multiviz}.
Finally, our work is restricted to second-order interaction explanations, while efficiently approximating higher-order interactions (also in tri-modal settings beyond LIP models) becomes an interesting challenge to overcome.

\textbf{Broader impact.}
We believe \methodx can empower model developers in debugging VLEs, understanding their similarity predictions, and finding unwanted biases in image--text data.
Especially when these models are used in high-stakes decision making, like in the case of medical applications.

\textbf{Code.}
We provide additional details on reproducibility in the Appendix, as well as the code to reproduce all experiments in this paper is available at \url{https://github.com/hbaniecki/fixlip}.

\begin{ack}
We gratefully acknowledge the Polish high-performance computing infrastructure PLGrid (HPC Centers: ACK Cyfronet AGH) for providing computer facilities and support within the computational grant no. PLG/2025/018330.
Hubert Baniecki was supported from the state budget within the Polish Ministry of Education and Science program ``Pearls of Science'' project number PN/01/0087/2022.
Barbara Hammer, Eyke Hüllermeier, Fabian Fumagalli, and Maximilian Muschalik gratefully acknowledge funding by the Deutsche Forschungsgemeinschaft (DFG, German Research Foundation): TRR 318/1 2021 -- 438445824.
\end{ack}

\bibliography{references}
\bibliographystyle{plainnat}

\clearpage
\appendix

\section*{Appendix}

Table~\ref{tab:notation} summarises the mathematical notation used in the paper. In Appendix~\ref{app:proofs}, we derive proofs for Theorems~\ref{thm:first-order}~\&~~\ref{thm:cross-modal-estimator}, and Proposition~\ref{prop:ranking}. 
Appendix~\ref{app:implementation_details} describes the implementation details of our methods introduced in Sections~\ref{sec:methods}~\&~\ref{sec:metrics}.
Appendix~\ref{app:experimental_setup} provides further details on the setup of experiments conducted in Section~\ref{sec:experiments}.
Finally, we discuss ablation results in Appendix~\ref{app:additional_results} and visualize additional explanations in Appendix~\ref{app:additional_explanations}.

\startcontents[sections]
\printcontents[sections]{l}{1}{\setcounter{tocdepth}{2}}

\clearpage
\begin{table}[]
    \centering
    \small
    \caption{Summary of notation.}
    \label{tab:notation}
    \vspace{0.5em}
    \begin{tabular}{ll}
        \toprule
         \textbf{Notation} & \textbf{Description} \\
         \midrule
         $n_\img \in \bbN$, $n_\txt \in \bbN$ & Image and text input dimensions (number of tokens) \\
         $x_\img \in \bbR^{n_\img}$, $x_\txt \in \bbR^{n_\txt}$ & Image and text input vectors \\
         $f(x_\img,x_\txt) $ & Vision--language encoder prediction for an image--text input pair \\
         $d \in \bbN$ & Encoder embedding dimension \\
         $f_\img(x_\img) \in \bbR^d$, $f_\txt(x_\txt) \in \bbR^d$ & Vision and language encoder embedding vectors\\
         $N_\img$, $N_\txt$ & Sets of image and text token indices \\
         $i, j$ & Token indices\\
         $\mathcal{B}$ & Explanation basis set containing indices of tokens and token pairs  \\
         $\expl \in \bbR^{|\mathcal{B}|}$ & Explanation object \\
         $\expl_i, \expl_{\{i,j\}} \in \bbR$ & Token attribution and interaction values \\
         $M_o \subseteq N$ with $N \in \{N_\img,N_\txt\}$ & Subset of image or text token indices\\
         $M \subseteq N_\img \cup N_\txt$ & Subset of image and text token indices \\
         $b_\img$, $b_\txt$ & Baseline image and text vectors\\
         $x\oplus_{M_o}b$ & Input $x$ masked using the operator for subset $M_o$ and baseline $b$ \\
         $\nu(M)$ & Game (value) function \\
         \midrule
         $\hat{\nu}$, $\hat{\nu}_\expl$ & Estimators of the game (value) function \\
         $p \in (0,1)$ & Probability parameter \\
         $\mathfrak{F}_p(\nu, \hat{\nu})$ & $p$-faithfulness error metric \\
         $\expl^{\method\text{-}p} \coloneq \argmin_{\expl}\mathfrak{F}_p(\nu,\hat\nu_\expl)$ & Explanation approximating the game function w.r.t. $p$-faithfulness \\
         $\bbP_p(M) \coloneq p^{\vert M \vert} (1-p)^{n_\img + n_\txt - \vert M \vert}$ & Probability distribution of sampled masks over $2^{N_\img \cup N_\txt}$ \\
         $m$ & Number of sampled mask subsets \\
         $M^{(1)},\dots,M^{(m)}$ & Sampled mask subsets of image and text token indices\\
         $\hat{\mathfrak{F}}^{(m)}_p(\nu,\hat\nu_\expl)$ & Model-agnostic estimator of $p$-faithfulness \\
         \midrule
         $\bbP_{p,\img}$ $\bbP_{p,\txt}$ & Probability distributions of sampled masks over $2^{N_\img}$ and $2^{N_\txt}$ \\
         $m_\img$, $m_\txt$ & Numbers of sampled image and text mask subsets\\
         $M^{(1)},\dots,M^{(m_\img)}$ & Sampled mask subsets of image token indices \\
         $M^{(1)},\dots,M^{(m_\txt)}$ & Sampled mask subsets of text token indices  \\
         $\hat{\mathfrak{F}}^{(m_\img, m_\txt)}_p(\nu,\hat\nu_\expl)$ & Cross-modal estimator of $p$-faithfulness \\
         \midrule
         $\mu_{\corr}(\expl;p)$ &  $p$-faithfulness Spearman's rank correlation evaluation metric \\
         $\mu_{\mathrm{AID}}(\expl)$ & Area between insertion/deletion curves evaluation metric \\
         $M_{\expl,\min,k}$ & Mask subset of size $k$ for explanation $\expl$ minimizing $\hat\nu_\expl(M_{\expl,\min,k})$ \\
         $M_{\expl,\max,k}$ & Mask subset of size $k$ for explanation $\expl$ maximizing $\hat\nu_\expl(M_{\expl,\min,k})$ \\
         $C_{\mathrm{insert}}(\expl)$ & Insertion curve; a set of paired $\big(k,\nu(M_{\expl,\max,k})\big)$ values \\
         $C_{\mathrm{delete}}(\expl)$ & Deletion curve; a set of paired $\big(k,\nu(M_{\expl,\min,n_\img + n_\txt - k})\big)$ values \\
         $\mu_{\mathrm{PGR}}(\expl;N_\txt)$ & Pointing game recognition evaluation metric \\
         $\expl_{\mathrm{in},k}$ & Interaction values belonging to text token $k$ \\
         $\expl_{\mathrm{out},k}$ & Interaction values \emph{not} belonging to text token $k$ \\
         $\expl_{>0}$, $\expl_{<0}$ & Negative and positive interaction values in an explanation \\
         $\Vert \expl \Vert_1 \coloneq \sum_{i,j} \vert \expl_{\{i,j\}} \vert$ & Absolute sum of an explanation \\
         \bottomrule
    \end{tabular}
\end{table}

\clearpage
\section{Proofs}\label{app:proofs}

\subsection{Proof of \texorpdfstring{\Cref{thm:first-order}}{Theorem 1}}
\begin{proof}
    To prove this result, we combine alternative representations of the cooperative games using the Möbius transform $a: 2^{N_\img \cup N_\txt} \to \bbR$, which is defined by
    \begin{align*}
        a(M) \coloneq \sum_{L \subseteq M} (-1)^{\vert M \vert - \vert L\vert} \nu(L) \text{ for every } M \subseteq N_\img \cup N_\txt.
    \end{align*}
    The approximated game $\hat\nu_\expl$ is a 2-additive game, and hence its Möbius transform is restricted up to second-order interactions \citep{grabisch2016set}.
    Moreover, the Möbius transform of the approximated game $\hat a^\method$ satisfies
    \begin{align*}
        \hat a^{\method\text{-}p}(\emptyset) &= \hat\nu_\expl(\emptyset) = \expl_0, 
        \\
        \hat a^{\method\text{-}p}(\{i\}) &= \hat\nu_\expl(\{i\})-\hat\nu_\expl(\emptyset) = \expl_i ,        \\
        \hat a^{\method\text{-}p}(\{i,j\}) &= \hat\nu_\expl(\{i,j\}) - \hat\nu_\expl(\{i\}) - \hat\nu_\expl(\{j\}) + \hat\nu_\expl(\emptyset) = \expl_{\{i,j\}}, 
        \end{align*}
    where $i,j \in N_\img \cup N_\txt$ and $i \neq j$.
    The approximated game $\hat\nu$ is by construction a 2-additive game, and thus all higher-order Möbius coefficients are zero \citep{grabisch2016set}.
    It was further shown by \citet[Proposition 3]{marichal2011weighted} that the weighted Banzhaf values of $\hat\nu$ can be computed as the optimal first-order approximation of $\hat\nu$.
    The representation of the best $k$-th order approximation in the Möbius transform was given by \citet[Proposition 6]{marichal2011weighted}, and reads for $i \in N_\img \cup N_\txt$, $k=1$, and equal probabilities $p$ as
    \begin{align*}
        &\hat a(\{i\}) + (-1)^{1-1} \sum_{M \subseteq N_\img \cup N_\txt : \vert M \vert > 1, i \in M} \binom{\vert M\vert - 1 - 1}{1-1} p^{\vert M \vert - 1} \hat a(M) 
        \\
        &= \hat a(\{i\}) + p \sum_{j \in N_\img \cup N_\txt: j \neq i} \hat a(\{i,j\}) 
        \\
        &= \expl_i + p \sum_{j \in N_\img \cup N_\txt: j \neq i} \expl_{\{i,j\}},
    \end{align*}
    which concludes the proof.
\end{proof}

\clearpage
\subsection{Proof of \texorpdfstring{\cref{thm:cross-modal-estimator}}{Theorem 2}}
\begin{proof}
    We first compute the expectations as
    \begin{align*}
       \bbE[\hat{\mathfrak F}_p^{(m_\img,m_\txt)}(\nu,\hat\nu_\expl)]
       &= \bbE\left[\frac{1}{m_\img m_\txt} \sum_{\ell_\img=1}^{m_\img} \sum_{\ell_\txt=1}^{m_\txt} (\nu(M_\img^{(\ell_\img)} \cup M_\txt^{(\ell_\txt)})-\hat\nu(M_\img^{(\ell_\img)} \cup M_\txt^{(\ell_\txt)}))^2\right]
        \\
        &= \frac{1}{m_\img m_\txt} \sum_{\ell_\img=1}^{m_\img} \sum_{\ell_\txt=1}^{m_\txt} \bbE\left[(\nu(M_\img^{(\ell_\img)} \cup M_\txt^{(\ell_\txt)})-\hat\nu(M_\img^{(\ell_\img)} \cup M_\txt^{(\ell_\txt)}))^2\right]
        \\
        &= \bbE_{(M_\img,M_\txt) \sim \bbP_{p,\img} \otimes \bbP_{p,\txt}}\left[(\nu(M_\img \cup M_\txt)-\hat\nu(M_\img \cup M_\txt)))^2\right] 
        \\
        &= \bbE_{M\sim \bbP}\left[\left(\nu(M)-\hat\nu(M)\right)^2\right] 
        \\
        &=  \mathfrak{F}_p(\nu,\hat\nu_\expl),
    \end{align*}
    since $M_\img \perp M_\txt$ are independent.
    
    We now proceed by computing the variance.
    We first introduce $g_\expl: 2^{N_\img} \times 2^{N_\txt} \to \bbR$ for $M_\img \subseteq N_\img$ and $M_\txt \subseteq N_\txt$ as
    \begin{align*}
        g_\expl(M_\img,M_\txt) \coloneq\left(\nu(M_\img \cup M_\txt)-\hat\nu_\expl(M_\img \cup M_\txt)\right)^2.
    \end{align*}

    For the standard Monte Carlo estimator, we then obtain
    \begin{align*}
        \bbV[\hat{\mathfrak F}_p^{(m)}(\nu,\hat\nu_\expl)] = \frac{1}{m} \bbV[g_\expl(M_\img,M_\txt)],
    \end{align*}
    due to $m$ iid samples.
    For the cross-modal estimator, we first prove a result for the explicit form of the variance in Proposition~\ref{prop:cross-modal-variance}.
    
    \begin{proposition}\label{prop:cross-modal-variance}
    We denote the expectation over the conditional variances as
    \begin{align*}
    &\tau_{\img \mid \txt} \coloneq \bbE_{M_\txt \sim \bbP_{p,\txt}}\left[\bbV_{M_\img \sim \bbP_{p,\img}}[(\nu(M_\img \cup M_\txt)-\hat\nu_\expl(M_\img \cup M_\txt))^2]\right],
    \\
    &\tau_{\txt \mid \img} \coloneq \bbE_{M_\img \sim \bbP_{p,\img}}\left[\bbV_{M_\txt \sim \bbP_{p,\txt}}[(\nu(M_\img \cup M_\txt)-\hat\nu_\expl(M_\img \cup M_\txt))^2]\right].
    \end{align*}
    The variance of the estimator $\hat{\mathfrak F}_p^{(m_\img,m_\txt)}(\nu,\hat\nu_\expl)$ is then given by
    \begin{align*}
        &\bbV[\hat{\mathfrak F}_p^{(m_\img,m_\txt)}(\nu,\hat\nu_\expl)] = \frac{m_\img+m_\txt-1}{m_\img m_\txt} \bbV[g_\expl(M_\img,M_\txt)] - \frac{m_\img-1}{m_\img m_\txt}\tau_{\img \mid \txt} - \frac{m_\txt-1}{m_\img m_\txt} \tau_{\txt \mid \img}.
    \end{align*}
    \end{proposition}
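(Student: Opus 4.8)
The plan is to expand the variance of the double average over the $m_\img \times m_\txt$ grid of sampled cells into a sum of pairwise covariances, and to exploit that two cells are independent precisely when they share no sampled subset. Abbreviating $g_{\ell_\img \ell_\txt} \coloneq g_\expl(M_\img^{(\ell_\img)}, M_\txt^{(\ell_\txt)})$, I would begin from
\begin{align*}
\bbV\big[\hat{\mathfrak F}_p^{(m_\img,m_\txt)}(\nu,\hat\nu_\expl)\big] = \frac{1}{(m_\img m_\txt)^2} \sum_{\ell_\img,\ell_\txt} \sum_{\ell_\img',\ell_\txt'} \cov\big(g_{\ell_\img \ell_\txt}, g_{\ell_\img' \ell_\txt'}\big),
\end{align*}
and partition the ordered index pairs into four types according to whether the image indices coincide ($\ell_\img = \ell_\img'$ or not) and whether the text indices coincide.

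The core of the argument is evaluating the covariance in each type. When both indices agree (same cell) the covariance equals $\bbV[g_\expl(M_\img,M_\txt)]$, and there are $m_\img m_\txt$ such terms. When both differ, the two cells are built from four mutually independent subsets, so the covariance vanishes. The two mixed types are the crux. For two cells in the same row, say $g = g_\expl(M_\img, M_\txt)$ and $g' = g_\expl(M_\img, M_\txt')$ sharing the image sample $M_\img$ but using distinct, independent text draws, conditioning on the common $M_\img$ and using conditional independence of the two text draws gives $\bbE[g\, g' \mid M_\img] = \big(\bbE_{M_\txt}[g_\expl(M_\img, M_\txt) \mid M_\img]\big)^2$, so the covariance reduces to $\bbV_{M_\img}\big[\bbE_{M_\txt}[g_\expl(M_\img,M_\txt)]\big]$, which I denote $V_\img$. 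By symmetry, two cells in the same column contribute $V_\txt \coloneq \bbV_{M_\txt}\big[\bbE_{M_\img}[g_\expl(M_\img,M_\txt)]\big]$. Counting $m_\img m_\txt(m_\txt - 1)$ same-row pairs and $m_\img(m_\img - 1)m_\txt$ same-column pairs then yields
\begin{align*}
\bbV\big[\hat{\mathfrak F}_p^{(m_\img,m_\txt)}\big] = \frac{1}{m_\img m_\txt}\bbV[g_\expl] + \frac{m_\txt - 1}{m_\img m_\txt}V_\img + \frac{m_\img - 1}{m_\img m_\txt}V_\txt.
\end{align*}

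It remains to convert $V_\img$ and $V_\txt$ into the quantities in the statement via the law of total variance. Conditioning on $M_\img$ gives $\bbV[g_\expl] = \tau_{\txt \mid \img} + V_\img$, hence $V_\img = \bbV[g_\expl] - \tau_{\txt \mid \img}$, and conditioning on $M_\txt$ gives $V_\txt = \bbV[g_\expl] - \tau_{\img \mid \txt}$. Substituting and collecting the coefficient of $\bbV[g_\expl]$ as $\tfrac{1 + (m_\txt - 1) + (m_\img - 1)}{m_\img m_\txt} = \tfrac{m_\img + m_\txt - 1}{m_\img m_\txt}$ produces the claimed identity.

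I expect the main obstacle to be the bookkeeping in the two mixed cases: correctly identifying the shared sampled subset, justifying conditional independence of the two fresh draws so that the cross-expectation factorizes, and counting the ordered index pairs of each type. Once $V_\img$ and $V_\txt$ are recognized as variances of conditional means, the closing law-of-total-variance substitution is routine.
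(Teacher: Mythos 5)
Your proposal is correct and follows essentially the same route as the paper's proof: expanding the variance into pairwise covariances over the sampling grid, splitting into the four index-coincidence cases, identifying the mixed-case covariances as variances of conditional means (with the all-distinct case vanishing by independence), counting the ordered pairs, and finishing with the law of total variance to express $V_\img$ and $V_\txt$ through $\tau_{\txt\mid\img}$ and $\tau_{\img\mid\txt}$. The term counts, the factorization argument via conditional independence, and the final coefficient bookkeeping all match the paper's derivation.
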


    \begin{proof}
     The variance is computed as
    \begin{align}\label{eq:proof-variance}
        &\bbV[\hat{\mathfrak F}_p^{(m_\img,m_\txt)}(\nu,\hat\nu_\expl)] = \bbV\left[\frac{1}{m_\img m_\txt} \sum_{\ell_\img=1}^{m_\img} \sum_{\ell_\txt=1}^{m_\txt}g_\expl\left(M_\img^{(\ell_\img)},M_\txt^{(\ell_\txt)}\right)\right] \notag
        \\
        &= \frac{1}{m_\img^2 m_\txt^2} \sum_{\ell_\img=1}^{m_\img} \sum_{\ell_\txt=1}^{m_\txt} \sum_{k_\img=1}^{m_\img} \sum_{k_\txt=1}^{m_\txt} \cov\left(g_\expl(M_\img^{(\ell_\img)},M_\txt^{(\ell_\txt)}),g_\expl(M_\img^{(k_\img)},M_\txt^{(k_\txt)})\right).
    \end{align}
    To compute the covariance, we distinguish four cases, namely
    \begin{align*}       
    &\cov(g_\expl(M_\img^{(\ell_\img)},M_\txt^{(\ell_\txt)}),g_\expl(M_\img^{(k_\img)},M_\txt^{(k_\txt)})) 
    \\&= \begin{cases}
        \bbV[g_\expl(M_\img,M_\txt)], &\text{if } \ell_\img = k_\img, \ell_\txt = k_\txt,
        \\        \cov\!\left(g_\expl(M_\img,M_\txt),g_\expl(M_\img,M'_\txt)\right), &\text{if } \ell_\img=k_\img, \ell_\txt \neq k_\txt,
        \\\cov\!\left(g_\expl(M_\img,M_\txt),g_\expl(M'_\img,M_\txt)\right), &\text{if } \ell_\img \neq k_\img, \ell_\txt = k_\txt,
        \\
        0,&\text{if } \ell_\img \neq k_\img, \ell_\txt \neq k_\txt,
    \end{cases}
    \end{align*}
    where $M_\img,M'_\img {\sim} \bbP_{p,\img}$ and $M_\txt,M'_\txt {\sim} \bbP_{\txt,p}$.
    The covariances can be further computed as
    \begin{align*}&\cov\!\left(g_\expl(M_\img,M_\txt),g_\expl(M_\img,M'_\txt)\right) = \bbE[g_\expl(M_\img,M_\txt)g_\expl(M_\img,M'_\txt)] - \bbE[g_\expl(M_\img,M_\txt)]\bbE[g_\expl(M_\img,M'_\txt)] 
    \\
    &= \bbE_{M_\img \sim \bbP_{p,\img}}\left[\bbE_{M_\txt \sim \bbP_{p,\txt}}[g_\expl(M_\img,M_\txt)]\bbE_{M'_\txt \sim \bbP_{p,\txt}}[g_\expl(M_\img,M'_\txt)]\right] - \bbE[g_\expl(M_\img,M_\txt)]^2
    \\
    &= \bbE_{M_\img \sim \bbP_{p,\img}}[\bbE_{M_\txt \sim \bbP_{p,\txt}}[g_\expl(M_\img,M_\txt)]^2] - \bbE_{M_\img\sim\bbP_{p,\img}}\left[\bbE_{M_\txt \sim \bbP_{p,\txt}}[g_\expl(M_\img,M_\txt)]\right]^2
    \\
    &= \bbV_{M_\img \sim \bbP_{p,\img}}\left[\bbE_{M_\txt \sim \bbP_{p,\txt}}[g_\expl(M_\img,M_\txt)]\right],
    \end{align*}
    where we have used $M_\img \perp M_\txt,M'_\txt$ and $M_\txt \perp M'_\txt$.
Similarly, we obtain
\begin{align*}
    \cov\left(g_\expl(M_\img,M_\txt),g_\expl(M'_\img,M_\txt)\right) = \bbV_{M_\txt \sim \bbP_{p,\txt}}[\bbE_{M_\img \sim \bbP_{p,\img}}\left[g_\expl(M_\img,M_\txt)]\right].
\end{align*}
Combining these results into \cref{eq:proof-variance}, we obtain
\begin{align*}
    \bbV[\hat{\mathfrak F}_p^{(m_\img,m_\txt)}(\nu,\hat\nu_\expl)] &= \frac{1}{m_\img^2 m_\txt^2} \sum_{\ell_\img,k_\img=1}^{m_\img} \sum_{\ell_\txt,k_\txt=1}^{m_\txt} \cov\left(g_\expl(M_\img^{(\ell_\img)},M_\txt^{(\ell_\txt)}),g_\expl(M_\img^{(k_\img)},M_\txt^{(k_\txt)}\right)
    \\
    &= \frac{1}{m_\img^2 m_\txt^2}\Big(\underbrace{\sum_{\ell_\img=1}^{m_\img} \sum_{\ell_\txt=1}^{m_\txt}\bbV[g_\expl(M_\img,M_\txt)]}_{\text{case } \ell_\img=k_\img, \ell_\txt=k_\txt}
    \\
    &+\underbrace{\sum_{\ell_\img=1}^{m_\img} \sum_{\substack{\ell_\txt,k_\txt=1 \\ \ell_\txt \neq k_\txt}}^{m_\txt} \bbV_{M_\img \sim \bbP_{p,\img}}\left[\bbE_{M_\txt \sim \bbP_{p,\txt}}[g_\expl(M_\img,M_\txt)]\right]}_{\text{case } \ell_\img=k_\img,\ell_\txt \neq k_\txt}
    \\
    &+ \underbrace{\sum_{\substack{\ell_\img,k_\img=1 \\ \ell_\img \neq k_\img}}^{m_\img} \sum_{\ell_\txt=1}^{m_\txt} \bbV_{M_\txt \sim \bbP_{p,\txt}}\left[\bbE_{M_\img \sim \bbP_{p,\img}}[g_\expl(M_\img,M_\txt)]\right]}_{\text{case } \ell_\img \neq k_\img,\ell_\txt = k_\txt}\Big)
    \\
    &=  \frac{m_\img m_\txt}{m_\img^2 m_\txt^2} \bbV[g_\expl(M_\img,M_\txt)] 
    \\
    &+ \frac{m_\img m_\txt (m_\txt-1)}{m_\img^2 m_\txt^2} \bbV_{M_\img \sim \bbP_{p,\img}}\left[\bbE_{M_\txt \sim \bbP_{p,\txt}}[g_\expl(M_\img,M_\txt)]\right]
    \\
    &+ \frac{m_\txt m_\img (m_\img -1)}{m_\img^2 m_\txt^2} \bbV_{M_\txt \sim \bbP_{p,\txt}}\left[\bbE_{M_\img \sim \bbP_{p,\img}}[g_\expl(M_\img,M_\txt)]\right]
\end{align*}
We now use the law of total variance and $M_\img \perp M_\txt$ to rewrite
\begin{align*}
    \bbV_{M_\img \sim \bbP_{p,\img}}\left[\bbE_{M_\txt \sim \bbP_{p,\txt}}[g_\expl(M_\img,M_\txt)]\right] = \bbV[g_\expl(M_\img,M_\txt)] - \bbE_{M_\img \sim \bbP_{p,\img}}[\bbV_{M_\txt \sim \bbP_{p,\txt}}[g_\expl(M_\img,M_\txt)]],
\end{align*}
and use a similar result for $M_\txt$ to obtain
\begin{align*}
    &\bbV[\hat{\mathfrak F}_p^{(m_\img,m_\txt)}(\nu,\hat\nu_\expl)] =\frac{m_\img m_\txt + m_\img m_\txt (m_\txt-1) + m_\txt m_\img(m_\img-1)}{m_\img^2  m_\txt^2} \bbV[g_\expl(M_\img,M_\txt)] 
    \\
    &- \frac{m_\txt-1}{m_\img m_\txt}\bbE_{M_\img \sim \bbP_{p,\img}}[\bbV_{M_\txt \sim \bbP_{p,\txt}}[g_\expl(M_\img,M_\txt)]] - \frac{m_\img-1}{m_\img m_\txt} \bbE_{M_\txt \sim \bbP_{p,\txt}}[\bbV_{M_\img \sim \bbP_{p,\img}}[g_\expl(M_\img,M_\txt)]]
    \\
    &=\frac{m_\img+m_\txt-1}{m_\img m_\txt} \bbV[g_\expl(M_\img,M_\txt)] 
    \\
    &- \frac{(m_\txt-1)\bbE_{M_\img \sim \bbP_{p,\img}}[\bbV_{M_\txt \sim \bbP_{p,\txt}}[g_\expl(M_\img,M_\txt)]]+ (m_\img-1)\bbE_{M_\txt \sim \bbP_{p,\txt}}[\bbV_{M_\img \sim \bbP_{p,\img}}[g_\expl(M_\img,M_\txt)]]}{m_\img m_\txt} 
    \\
    &= \frac{m_\img+m_\txt-1}{m_\img m_\txt} \bbV[g_\expl(M_\img,M_\txt)] - \frac{m_\img-1}{m_\img m_\txt}\tau_{\img \mid \txt} - \frac{m_\txt-1}{m_\img m_\txt} \tau_{\txt \mid \img},
\end{align*}
where we have used the definitions of
$\tau_{\img \mid \txt} \coloneq \bbE_{M_\txt \sim \bbP_{p,\txt}}[\bbV_{M_\img \sim \bbP_{p,\img}}[g_\expl(M_\img,M_\txt)]$ and
$\tau_{\txt \mid \img} \coloneq \bbE_{M_\img \sim \bbP_{p,\img}}[\bbV_{M_\txt \sim \bbP_{p,\txt}}[g_\expl(M_\img,M_\txt)]]$.
This concludes the proof.
    \end{proof}

By using \cref{prop:cross-modal-variance}, we obtain 
\begin{align*}
    \bbV[\hat{\mathfrak F}_p^{(m_\img,m_\txt)}(\nu,\hat\nu_\expl)] \leq \frac{m_\img + m_\txt -1}{m_\img m_\txt}  \bbV[g_\expl(M_\img,M_\txt)] \leq \frac{m_\img + m_\txt}{m_\img m_\txt}  \bbV[g_\expl(M_\img,M_\txt)],
\end{align*}
since $\tau_{\img\mid \txt},\tau_{\txt\mid\img} \geq 0$.

Moreover, by the law of total variance we have $\tau_{\img\mid\txt},\tau_{\txt\mid\txt} \leq  \bbV[g_\expl(M_\img,M_\txt)]$, which implies
\begin{align*}
    \bbV[\hat{\mathfrak F}_p^{(m,m)}(\nu,\hat\nu_\expl)] &\geq \frac{2m-1}{m^2} \bbV[g_\expl(M_\img,M_\txt)] - \frac{2(m-1)}{m^2} \bbV[g_\expl(M_\img,M_\txt)] 
    \\
    &= \frac{1}{m^2} \bbV[g_\expl(M_\img,M_\txt)]
    \\
    &= \bbV[\mathfrak F_p^{(m^2)}(\nu,\hat\nu_\expl)],
\end{align*}
which finishes the proof.
\end{proof}

\clearpage
\subsection{Proof of \texorpdfstring{\Cref{prop:ranking}}{Proposition 1}}

\begin{proof}
    For a token attribution explanation $\expl$, the additive approximation reads as
    \begin{align*}
        \hat\nu_\expl(M) = \expl_0 + \sum_{i \in M} \expl_i.
    \end{align*}
    Denote $M^+_k$ the top-$k$ coefficients of $\expl$, then for $M \subseteq N_\img \cup N_\txt$ with $\vert M \vert = k$, we have
    \begin{align*}
        \sum_{i \in M} \expl_i \leq \sum_{i \in M^+_k} \expl_i,
    \end{align*}
    and thus 
    \begin{align*}
        M_{\expl,\max,k} = \argmax_{M\subseteq N_\img \cup N_\img: \vert M \vert = k} \hat\nu_\expl(M) = \argmax_{M\subseteq N_\img \cup N_\img: \vert M \vert = k} \sum_{i \in M} \expl_i = M^+_k.
    \end{align*}
    Conversely, for the bottom-$k$ coefficients $M^-_k$, we have
        \begin{align*}
        \sum_{i \in M} \expl_i \geq \sum_{i \in M^-_k} \expl_i,
    \end{align*}
    for every $M \subseteq N_\img \cup N_\txt$ with $\vert M \vert = k$, and thus
        \begin{align*}
        M_{\expl,\min,k} = \argmin_{M\subseteq N_\img \cup N_\img: \vert M \vert = k} \hat\nu_\expl(M) = \argmin_{M\subseteq N_\img \cup N_\img: \vert M \vert = k} \sum_{i \in M} \expl_i = M^-_k.
    \end{align*}

    Lastly, the top-$k$ coefficients correspond to the complement of bottom-$(n_\img+n_\txt-k)$ coefficients, which yields
    \begin{align*}
    M_{\expl,\min,n_\img + n_\txt - k} = M_{n_\img+n_\txt - k}^- = (N_\img \cup N_\txt) \setminus M_k^+, 
    \end{align*}
    which finishes the proof of the deletion curve $\mathcal C_{\mathrm{delete}} = \{(k,\nu(M_{\expl,\min,n_\img+n_\txt -k}))\}_{k=1,\dots,n_\img+n_\txt}$.
\end{proof}

\clearpage
\section{Implementation Details}\label{app:implementation_details}

\subsection{Faithful interaction explanations of LIP models (\methodx)}

Because of how the model's forward function is implemented, without loss of generality, we explain a logit-scaled similarity output from Equation~\ref{eq:vle}, i.e. $C\cdot f(x_\img, x_\txt)$, which depends on the particular model's learnt constant, e.g. $C\approx100$ for CLIP, but $C\approx117$ for SigLIP and $C\approx112$ for SigLIP-2.

\textbf{Masking.} 
Following related work using input removal in attribution methods~\citep{hase2021outofdistribution,erion2021improving,lundstrom2022rigorous,fumagalli2023shapiq}, we apply simple baseline masking strategies, which were also proven successful based on our empirical results.
For vision encoders, we use a $\mathbf{0}$ baseline after image normalization to propagate no signal forward.
In the case of language encoders, we use a native attention masking mechanism, where we encode active tokens with $1$ and the deleted ones with $0$.
It is important to leave the beginning/end sequence and padding tokens as is.
Note that in theory, the maximum possible context length $n_\txt$ is finite (e.g. 64, 77) and could be treated as constant across different inputs.
However, we never explain these special tokens, treating parts of text inputs as the only features of interest.
Alternative implementations of masking worth considering are: removing words from the text inputs, or tokens from tokenized inputs, imputing with the [UNK] token, or [MASK], e.g. in the case of the BiomedCLIP architecture that is based on BERT~\citep{zhang2024biomedclip}.
Acknowledging the potential influence of the masking approach, as well as position encodings, on the value of an empty input, we check that it remains about constant across different text input lengths in Figure~\ref{fig:appendix_empty_value}.
\begin{figure}[h]
    \centering
    \includegraphics[width=0.37\linewidth]{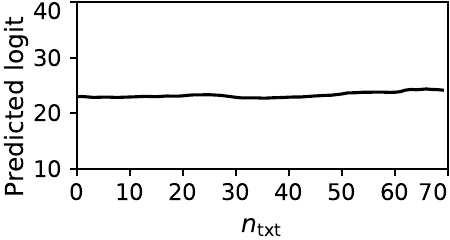}
    \caption{Value of an empty input is about constant across different text lengths.}
    \label{fig:appendix_empty_value}
\end{figure}

\textbf{Budget split in the cross-modal estimator.} 
The model-agnostic estimator (Definition~\ref{def:model-agnostic-estimator}) relies on $m$ sampled masks for approximation; the number is often called the approximation budget.
The cross-modal estimator (Definition~\ref{def:cross-modal-estimator}) introduces budget split, where $m=m_\txt \cdot m_\img$.
For all our experiments, we set $m_\txt \coloneq \min\!\big(2^{n_\txt}, \max(4, \lceil \sqrt{m}\cdot n_\txt/n_\img\rceil)\big)$ and $m_\img \coloneq \min\!\big(2^{n_\img}, \max(4, \lfloor \sqrt{m}\cdot n_\img/n_\txt\rfloor)\big)$ as a reasonable default, allowing for a balanced exploration of both input spaces.

\textbf{Two-step filtering approach.} 
For the ViT-B/16 and ViT-L/16 model versions with a larger number of input image tokens (patches), we perform a cheap computation of the first-order attribution to prioritize interactions for approximation.
We consider two strategies: picking a clique subset of the most interesting tokens, or focusing only on cross-modal interaction for explanation, which we specifically apply in the pointing game recognition (Definition~\ref{def:pointing-game-recognition}, Appendix~\ref{app:pointing-game-recognition}).
In the first strategy, given the desirable clique size $k=72$, we take $k_\txt \coloneq \max\!\big(5, \lceil k \cdot n_\txt / (n_\txt + n_\img) \rceil\big)$ text tokens with the highest absolute feature attribution scores, and $k_\img \coloneq k - k_\txt$ image tokens accordingly.
Similarly to the above considerations regarding the budget split between the two modalities, we found this to be empirically adequate.
Future work can propose optimal strategies for both of these nuances.

\textbf{Greedy subset selection.} 
Even for the ViT-B/32 model version with fewer than 100 input tokens (nodes in a graph), finding cliques of the highest and smallest sums using brute force is computationally prohibitive.
Still, we want to use these subsets denoted as $M_{\expl,\mathrm{min},k}$ and $M_{\expl,\mathrm{max},k}$ in evaluation with insertion/deletion curves~(Definition~\ref{def:insertion-deletion-curves}) and explanation visualization.
Thus, we implemented a simple greedy algorithm starting the search from each (or a subset) of the tokens~(nodes).
Its general goal is to add consecutive tokens to the subset based on minimizing/maximizing the subset's value.
For models with a larger number of inputs, e.g. $n_\img+n_\txt = 196+30$, the greedy strategy remains applicable; however, it may require a few minutes to complete when considering all subset sizes $k=1, \dots, n_\img+n_\txt$.

\subsection{Area between the insertion/deletion curves (AID)}\label{app:insertion-deletion-curves}

Figure~\ref{fig:appendix_insertion_deletion} describes visually the process of computing insertion/deletion curves and how it can differ between the first-order attribution methods and second-order interaction explanations.
Consider the following example shown in \textbf{(bottom, Ours, 73\%)}, where the faces of a smiling man and child are masked with the masked caption saying ``\texttt{a man} — \texttt{a child smiling at a} — \texttt{in a} —''. 
Here, the model predicts the input to be dissimilar (below-average similarity), which means it enters negative values in our normalized case. 
Contrarily, in \textbf{(bottom, Baseline)}, the unmasked \texttt{restaurant} text token keeps the model’s similarity near the average level.
Another interesting phenomenon happens for the insertion curve in Figure~\ref{fig:appendix_insertion_deletion} \textbf{(top)}. 
Faithfully masking the redundant information with \textbf{our} method causes the model’s similarity to increase above the original prediction. 
Contrarily, in~\textbf{(top, Baseline)}, the baseline gradient-based method is unable to faithfully recover such redundant tokens.

\subsection{Pointing game recognition (PGR)}\label{app:pointing-game-recognition}

Figure~\ref{fig:appendix_pointing_game} illustrates an example of the pointing game evaluation for interaction explanations like~\methodx, giving additional context as to why first-order attribution methods fail to pass this desirable sanity check.
A saliency map can only highlight important parts of image/text inputs, but whenever more objects/concepts appear in the input, they are unable to disentangle the basic relationship between the two modalities. 
PGR metric measures the ratio of absolute values of ``correctly'' identified cross-modal interaction terms to total interaction terms. 
For example, in Figure~\ref{fig:appendix_pointing_game} \textbf{(2nd row, 2nd column)}, we sum the positive interactions between ``banana''/``cat'' image tokens and the \texttt{banana}/\texttt{cat} text tokens, as well as the absolute negative interactions between ``tractor''/``ball'' image tokens and the \texttt{banana}/\texttt{cat} text tokens. 
Then, we divide this sum by the total absolute sum of all interactions, which gives a normalized PGR score.
In~Appendix~\ref{app:datasets}, we further describe the specific combinations of image and object/class labels used to create our exemplary benchmark in this paper, although the overall methodology should be treated in a generic manner.

\begin{figure}[h]
    \centering
    \includegraphics[width=0.83\linewidth]{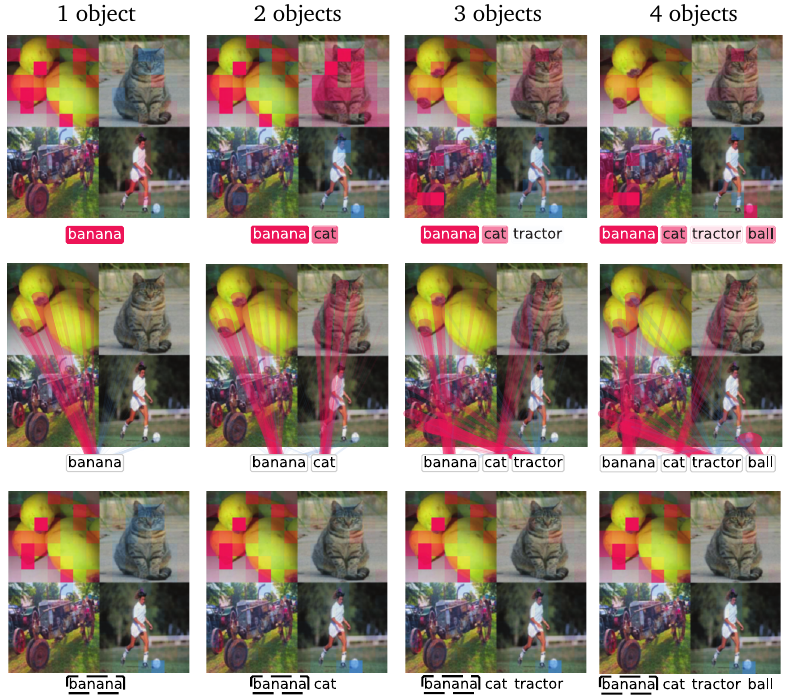}
    \caption{
    Visualization of a first-order attribution (\textbf{top row}), e.g. Shapley values, compared to cross-modal interactions in \methodx (\textbf{middle row}) for an exemplary pointing game.
    \methodx can be conditioned on any token, e.g. the first text input token (\textbf{bottom row}) for a simpler visualization.
    Consecutive columns here correspond to consecutive columns in Tables~\ref{tab:pointing_game_clip32},~\ref{tab:pointing_game_siglip}~\&~\ref{tab:pointing_game_clip16}.
    }
    \label{fig:appendix_pointing_game}
\end{figure}

\begin{figure}[h]
    \centering
    \includegraphics[width=\linewidth]{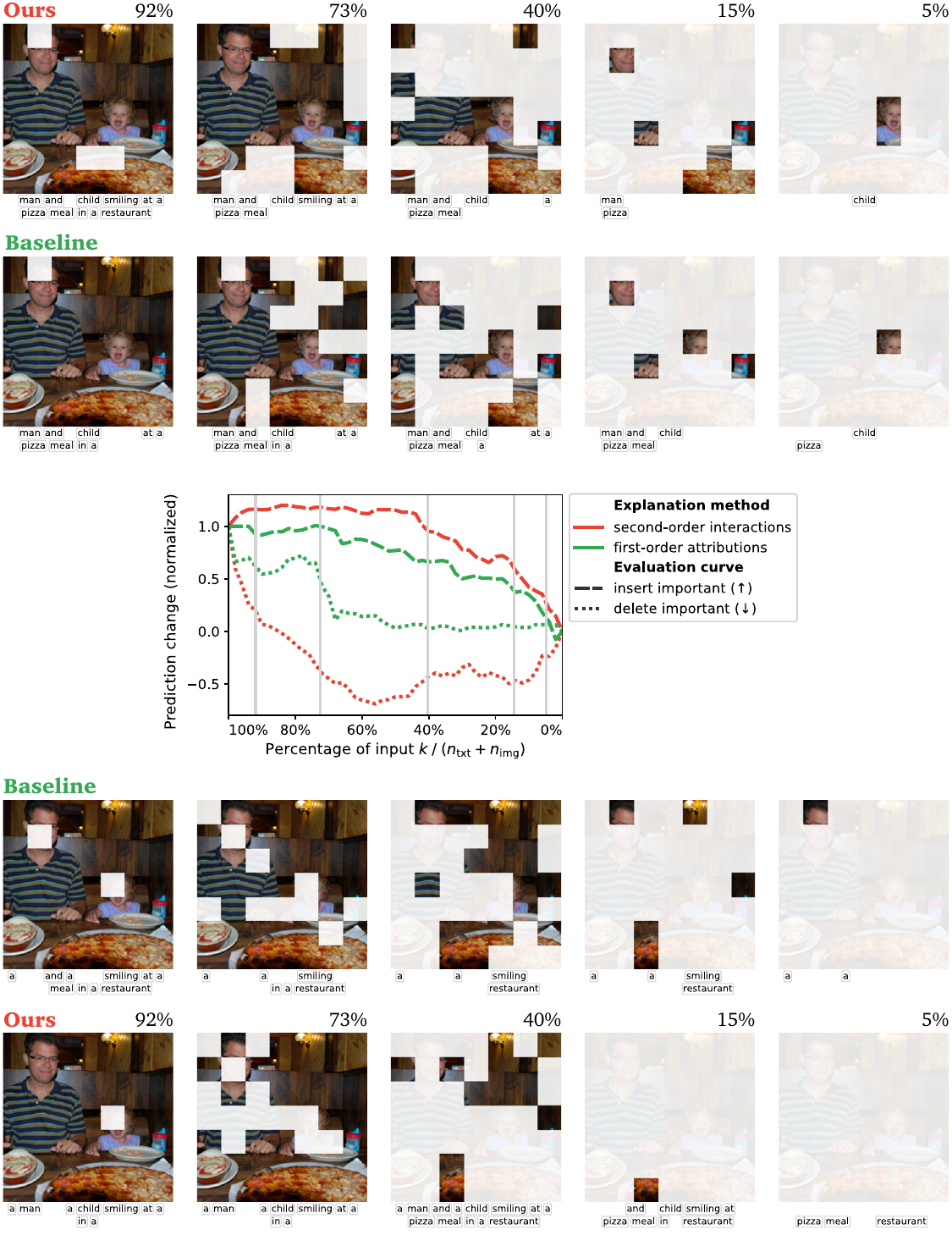}
    \caption{
    Exemplary insertion/deletion curves for two explanations $\expl$ computed with \methodx and another first-order method.
    We show five subset visualizations per curve per method.
    In the case of the baseline method, each consecutive subset needs to include the preceding one, which is not the case for \methodx, capturing the complex cross-modal and intra-modal interactions.
    For example, a \texttt{child} with the corresponding face patches (\textbf{top right corner}) gets overvalued by the \texttt{man}--\texttt{pizza} tokens appearing jointly with the corresponding image part (\texttt{child} disappears).
    Similarly, in the deletion curve, the subsets of text and image tokens interchange with each other at around 50\% (\textbf{bottom row}).
    }
    \label{fig:appendix_insertion_deletion}
\end{figure}

\clearpage
\section{Experimental Setup}\label{app:experimental_setup}

\subsection{Models}

We use the following openly available models from Hugging Face with default hyperparameters:
\begin{itemize}
    \item CLIP ViT-B/32~\citep{radford2021learning}: \texttt{openai/clip-vit-base-patch32} (MIT License)
    \item CLIP ViT-B/16~\citep{radford2021learning}: \texttt{openai/clip-vit-base-patch16} (MIT License)
    \item SigLIP ViT-B/16~\citep{zhai2023siglip}: \texttt{google/siglip-base-patch16-224} (Apache License 2.0)
    \item SigLIP ViT-L/16~\citep{zhai2023siglip}: \texttt{google/siglip-large-patch16-256} (Apache License 2.0)
    \item SigLIP-2 ViT-B/32~\citep{tschannen2025siglip2}: \texttt{google/siglip2-base-patch32-256} (Apache License 2.0)
    \item SigLIP-2 ViT-B/16~\citep{tschannen2025siglip2}: \texttt{google/siglip2-base-patch16-224} (Apache License 2.0)
    \item SigLIP-2 ViT-L/16~\citep{tschannen2025siglip2}: \texttt{google/siglip2-large-patch16-256} (Apache License 2.0)
\end{itemize}
We additionally rely on the CLIP ViT-B/32 and ViT-B/16 model versions from the \texttt{clip} Python library~\citep[][MIT License]{radford2021learning}, not to modify the official implementation of the baselines (Appendix~\ref{app:baselines}).
We set the batch size to 64 for the base models and to 32 for the large models, performing computation on A100 GPUs with 40GB VRAM.

\subsection{Datasets}\label{app:datasets}

We use the following openly available datasets from Hugging Face:
\begin{itemize}
    \item MS COCO test set~\citep{lin2014microsoft}: \texttt{clip-benchmark/wds\_mscoco\_captions} (CC BY 4.0)
    \item ImageNet-1k validation set~\citep{deng2009imagenet}: \texttt{ILSVRC/imagenet-1k} (ImageNet Agreement)
\end{itemize}
Experiments with the CLIP models are performed using 1000 image--text pairs from the MS COCO test set, for which each of the models predicted the highest similarity scores. 
Experiments with the SigLIP-2 models are performed using 100 image--text pairs to save computational resources, since we are not using it to compare with baseline methods.

Regarding ImageNet-1k, we use all 50 images from each of the following 10 class labels for constructing the pointing game evaluation: \texttt{goldfish} (1), \texttt{husky} (248), \texttt{cat} (282), \texttt{plane} (404), \texttt{church} (497), \texttt{ipod} (605), \texttt{ball} (805), \texttt{tractor} (866), \texttt{banana} (954), \texttt{pizza} (963).
We combine these images with labels into 10 varied games as follows: \texttt{goldfish-husky-pizza-tractor}, \texttt{cat-goldfish-plane-pizza}, \texttt{banana-cat-tractor-ball}, \texttt{husky-banana-plane-church}, \texttt{pizza-ipod-goldfish-banana}, \texttt{ipod-cat-husky-plane}, \texttt{tractor-ball-banana-ipod}, \texttt{plane-church-ball-goldfish}, \texttt{church-pizza-ipod-cat}, \texttt{ball-husky-banana-tractor}.
In each case, we distinguish four scenarios by gradually adding each consecutive token from left to right, resulting in 40 scenarios, each with a total of 50 images, which constitutes a reasonable case.

\subsection{Baselines}\label{app:baselines}

We use the following publicly available source code:
\begin{itemize}[leftmargin=20pt]
    \item GAME~\citep{chefer2021generic}: \url{https://github.com/hila-chefer/Transformer-MM-Explainability} (MIT License)
    \item Grad-ECLIP~\citep{zhao2024gradeclip}: \url{https://github.com/Cyang-Zhao/Grad-Eclip} (License unknown)
    \item exCLIP~\citep{moeller2025explaining}: Supplementary material at \url{https://openreview.net/forum?id=plkrRJt98c} (License unknown)
\end{itemize}
We were unable to run exCLIP on the CLIP (ViT-B/16) model version for inputs with more than 24 text tokens, which are common in MS COCO, running into out-of-memory errors.
Furthermore, we refrain from attempting to run these implementations on SigLIP models, which were not part of the original experiments.
For context, we did not find an openly available implementation of InteractionLIME~\citep{joukovsky2023model}, while it is not clear how to adapt InteractionCAM~\citep{sammani2024visualizing} (proposed for image--image encoders) to the text/language domain.

\subsection{Metrics}

For $p$-faithfulness correlation (Definition~\ref{def:p-faithfulness-correlation}), we set $m=1000$ for a reasonable sample size and analyze $p=\{0.3, 0.5, 0.7\}$, as well as Shapley sampling weights to give a broader context.

Note that the insertion/deletion curves (Definition~\ref{def:insertion-deletion-curves}) have different lengths for each input (explanation), depending on the number of tokens (players).
We thus interpolate each curve over a fixed range of 51 points between $0$--$100\%$ input and then aggregate them to obtain the final visualization.  

Pointing game recognition has no additional hyperparameters beyond the pre-defined dataset described in Appendix~\ref{app:datasets}.
For the example presented in Table~\ref{tab:pointing_game_siglip}, we omit the (sub)games with either \texttt{goldfish} or \texttt{ipod} for SigLIP models, because their corresponding text tokenizer divides these class labels into multiple tokens.

\subsection{Compute resources}\label{app:compute_resources}

Experiments described in Section~\ref{sec:experiments} and Appendix~\ref{app:additional_results} were computed on a cluster consisting of $4\times$ AMD Rome 7742 CPUs (256 cores), 4TB of RAM, and $16\times$ A100 GPUs for about 15 days combined.
We envision that preliminary and failed experiments have required the same amount of compute resources.

\clearpage
\section{Additional Experimental Results}\label{app:additional_results}

We report the insertion/deletion results for CLIP (ViT-B/16) in Figure~\ref{fig:insertion_deletion_clip16} and for SigLIP-2 (ViT-B/32) in Figure~\ref{fig:insertion_deletion_siglip32}, where we also analyze the gradual relationship between $p$-masking in \methodx and the model's prediction change more gradually.
Table~\ref{tab:pointing_game_clip16} reports pointing game results for CLIP (ViT-B/16), where interestingly \methodx with $p=0.5$ outperforms \methodx (Shapley interactions).

Figure~\ref{fig:appendix_faithfulness} shows further ablations based on the different distributions in the $p$-faithfulness correlation metric.
Interestingly, \methodx with $p=0.7$ evaluated on masks from $p=0.3$ performs much better than \methodx with $p=0.3$ evaluated on masks from $p=0.7$, which further confirms our motivation to increase $p$ in machine learning applications where out-of-distribution sampling is not desirable.

\begin{figure}[ht]
    \centering
    \includegraphics[width=\linewidth]{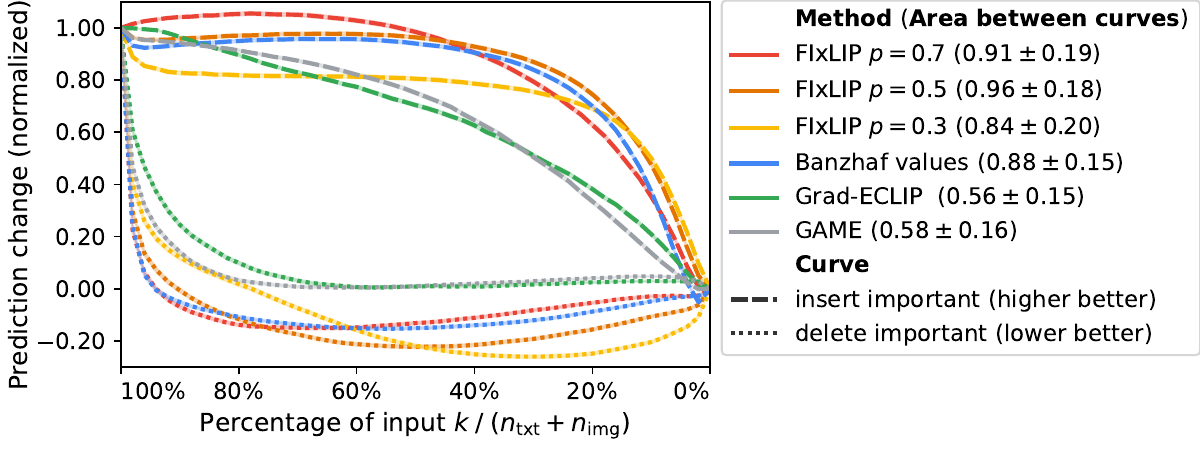}
    \caption{
    \textbf{Insertion/deletion curves for CLIP (ViT-B/16) on MS COCO.}  
    Extended Figure~\ref{fig:insertion_deletion_clip32}.
    AID score (higher is better) for \methodx against alternative explanation methods, where a random baseline scores $0$.
    The y-axis is normalized between the model's prediction on the original input ($100\%$) and the fully removed one ($0\%$), where negative values denote that the model is predicting the inputs are unsimilar.
    Gradient-based methods such as Grad-ECLIP fail to recover nonlinear rankings of important tokens, whereas FIxLIP faithfully recovers the optimal subset explanation.
    exCLIP does not appear in the comparison as its implementation returns an OOM error when querying the larger model ($14\times14$ image tokens) with texts longer than 24 tokens, which are common in MS~COCO.
    }
    \label{fig:insertion_deletion_clip16}
\end{figure}

\begin{figure}[ht]
    \centering
    \includegraphics[width=\linewidth]{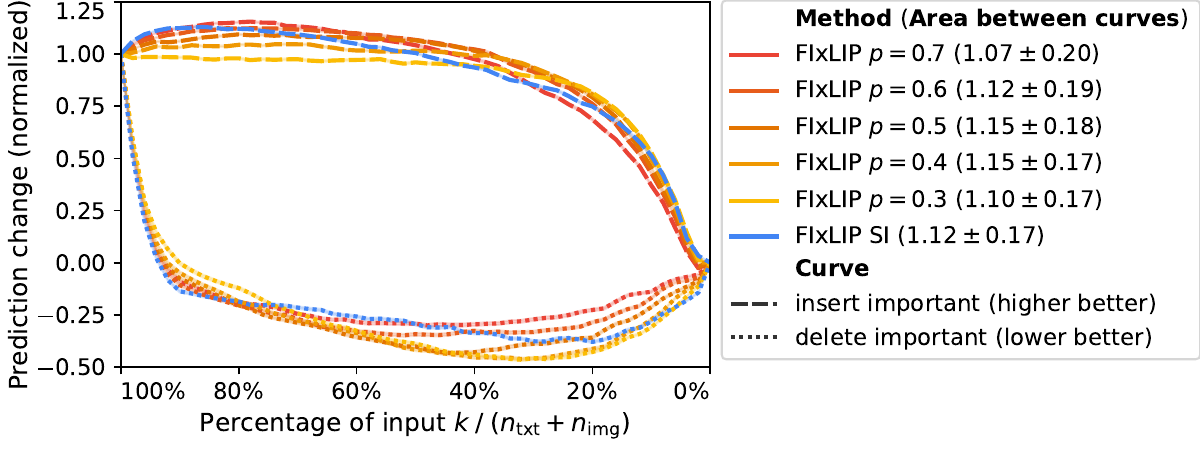}
    \caption{
    \textbf{Insertion/deletion curves for SigLIP-2 (ViT-B/32) on MS COCO.}  
    Extended Figure~\ref{fig:insertion_deletion_clip32}.
    AID score (higher is better) for a model-agnostic estimator of \methodx with different masking weights.
    The y-axis is normalized between the model's prediction on the original input ($100\%$) and the fully removed one ($0\%$), where negative values denote that the model is predicting the inputs are unsimilar.
    We observe a gradual relationship between $p$ and the steepness of the curves at a particular fraction of input deleted.
    We omit comparison with other baselines in this case as they were originally implemented for CLIP and not SigLIP.
    }
    \label{fig:insertion_deletion_siglip32}
\end{figure}

\begin{table}[ht]
    \centering
    \caption{
    \textbf{Pointing game recognition for CLIP (ViT-B/16) on ImageNet-1k.}
    Extended Table~\ref{tab:pointing_game_clip32}.
    PGR score for \methodx against alternative explanation methods, where a random baseline scores~$0.25$. 
    First-order methods, such as Grad-ECLIP and Shapley values, fail to distinguish between multiple objects simultaneously, while second-order methods faithfully recover the appropriate explanation (up to the irreducible non-optimality of the pointing game).
    }
    \label{tab:pointing_game_clip16}
    \vspace{0.5em}
    \begin{tabular}{lcccc}
        \toprule
         \textbf{Explanation Method} & \multicolumn{4}{c}{\textbf{Recognition} ($\uparrow$)} \\
         & 1 object & 2 objects & 3 objects & 4 objects \\
        \midrule
         GAME~\citep{chefer2021generic} & $.60_{\pm.11}$ & $.41_{\pm.04}$ & $.33_{\pm.02}$ & $.27_{\pm.01}$ \\
          Grad-ECLIP~\citep{zhao2024gradeclip} & $.72_{\pm.13}$ & $.44_{\pm.04}$ & $.33_{\pm.01}$ & $.26_{\pm.01}$ \\
          Shapley values & $.66_{\pm.07}$ & $.57_{\pm.05}$ & $.50_{\pm.04}$ & $.42_{\pm.03}$ \\
          Banzhaf values & $.70_{\pm.08}$ & $.58_{\pm.05}$ & $.50_{\pm.05}$ & $.41_{\pm.04}$ \\
         \cmidrule[0.25pt]{1-5}
          exCLIP~\citep{moeller2025explaining} & $.73_{\pm.15}$ & $.82_{\pm.07}$ & $.85_{\pm.05}$ & $.87_{\pm.05}$ \\
          \methodx (Shapley interactions) & $.76_{\pm.08}$ & $.78_{\pm.06}$ & $.79_{\pm.05}$ & $.79_{\pm.05}$ \\
          \methodx (w. Banzhaf interactions  $p=0.3$) & $.76_{\pm.10}$ & $.75_{\pm.08}$ & $.77_{\pm.07}$ & $.77_{\pm.06}$ \\
          \methodx (w. Banzhaf interactions  $p=0.5$) & $.81_{\pm.08}$ & $.80_{\pm.06}$ & $.81_{\pm.05}$ & $.81_{\pm.05}$ \\
          \methodx (w. Banzhaf interactions  $p=0.7$) & $.75_{\pm.09}$ & $.75_{\pm.07}$ & $.77_{\pm.06}$ & $.78_{\pm.05}$ \\
        \bottomrule
    \end{tabular}
\end{table}

\begin{figure}[ht]
    \makebox[\textwidth][r]{\includegraphics[height=13.1em]{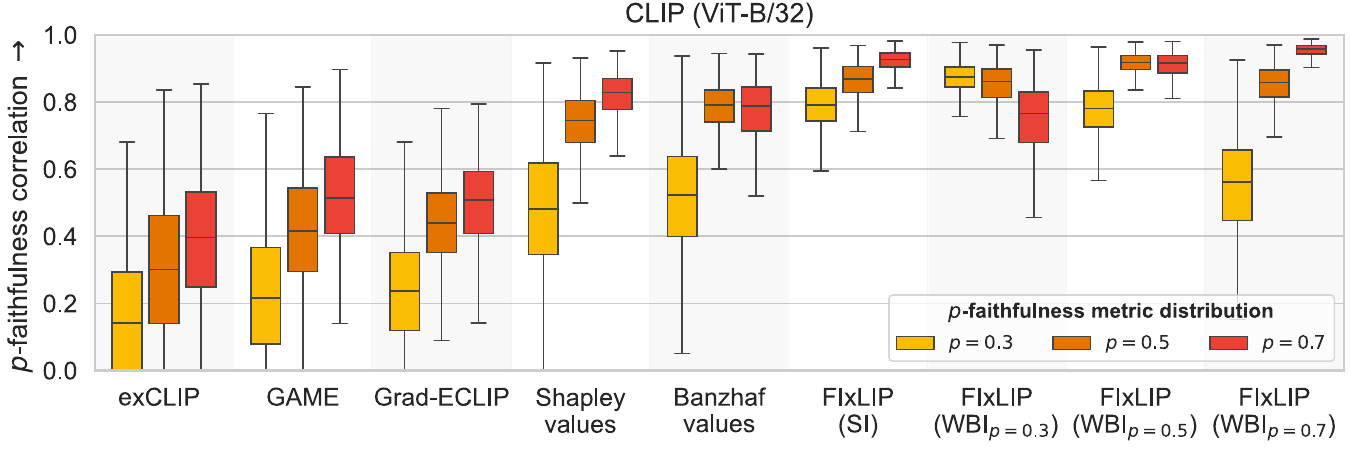}}\\[0.5em]
    \makebox[\textwidth][r]{\includegraphics[height=13.5em]{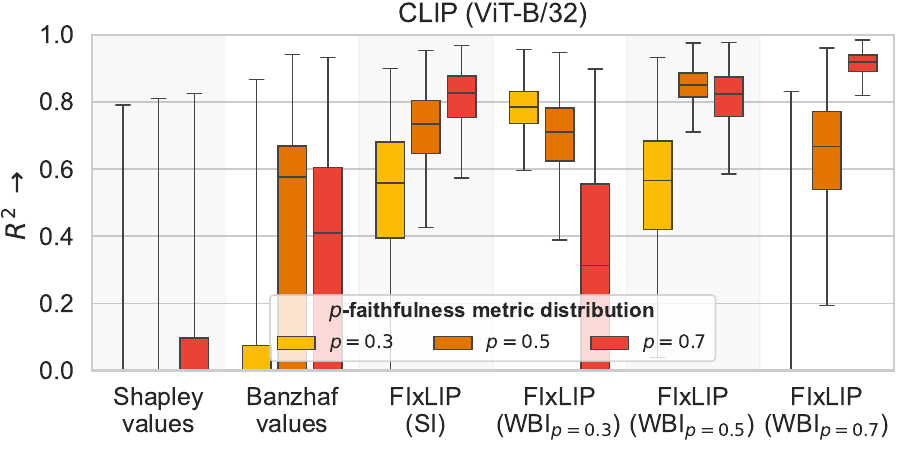}}\\[1.5em]
    \makebox[\textwidth][r]{\includegraphics[height=13.1em]{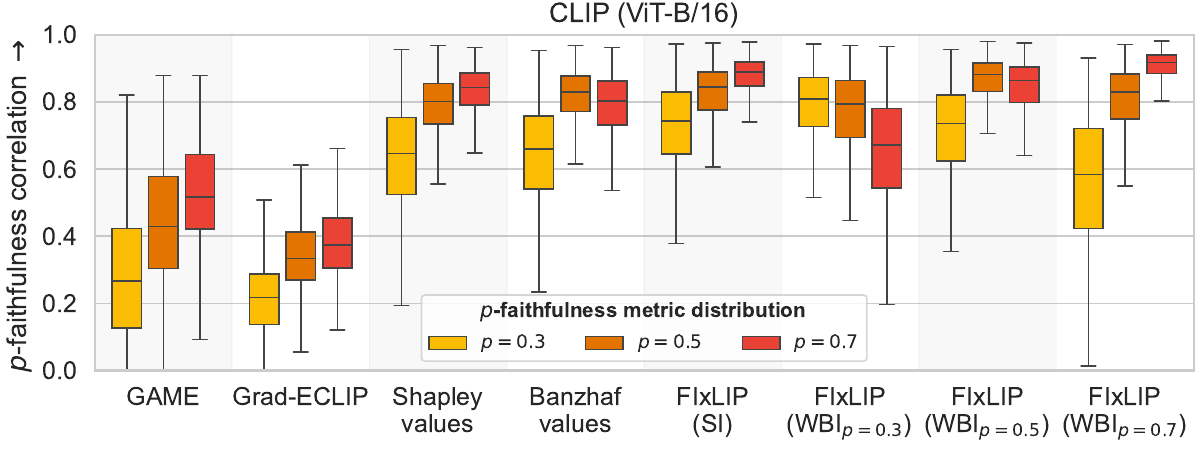}}
    \makebox[\textwidth][r]{\includegraphics[height=13.5em]{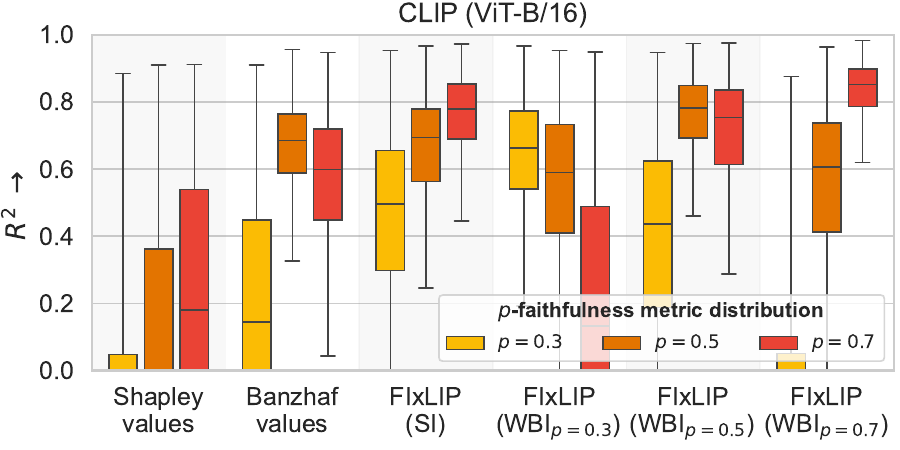}}\\[0.5em]
    \caption{
    \textbf{Faithfulness evaluation for CLIP (ViT-B/32 and ViT-B/16) on MS COCO}. Extended Figure~\ref{fig:eval_faithfulness}. 
    We evaluate the faithfulness of different \methodx variants and baselines in terms of $p$-faithfulness correlation and the $R^2$ coefficient (where it is possible for game-theoretic approaches). 
    For $1\,000$ inputs, we sample $1\,000$ evaluation masks according to different $p$-faithfulness distributions: {\color{color_03}$p=0.3$}, {\color{color_05}$p=0.5$}, {\color{color_07}$p=0.7$}. 
    For each $p$-faithfulness distribution, the corresponding \methodx (w. Banzhaf interactions) outperforms the baselines.
    }
    \label{fig:appendix_faithfulness}
\end{figure}

\clearpage
\section{Visual Explanatons}\label{app:additional_explanations}

\subsection{Guidance on interpreting visualizations of interaction explanations}\label{app:guide_on_interpreting}

In all of our visualizations, explanations are highlighted using a {\color{myred}red}–{\color{myblue}blue} color scheme, where {\color{myred} red} indicates a {\color{myred} positive} contribution to the model's output and {\color{myblue} blue} a {\color{myblue} negative} one. 
The transparency of the color reflects the strength of the contribution—more opaque elements correspond to higher absolute explanation values. 
Pairwise interactions are represented as edges, main effects (or first-order attributions) are displayed as colored boxes around words or overlayed as heatmaps on top of images. 
In addition to the color and transparency, edges also vary in thickness based on the interaction strength. 
In general, interactions may link elements within the same modality, e.g. text--text interactions or image--image interactions, or across modalities, e.g. text--image interactions.

Figure~\ref{fig:appendix_guide_on_interpreting} illustrates how to interpret visualizations in a multimodal setting involving image and text inputs. 
Provided an input image and text pair (Figure~\ref{fig:appendix_guide_on_interpreting}\textbf{A}), we compute \methodx and retrieve an explanation $\expl$ containing interactions and main effects. 
The \methodx output (Figure~\ref{fig:appendix_guide_on_interpreting}\textbf{B}) can be visualized by plotting the image and text together, and overlaying the main effects and interactions. 
This allows for a quick screening of interesting parts in the input as judged by an explanation. 
In this particular example, the \texttt{elephant} text token is positively linked to all image tokens containing the elephant. 
The two text tokens \texttt{white} and \texttt{bird} are linked to the one image token containing the bird. 
Here, \texttt{elephant} is red because removing it would decrease the predicted similarity score, while \texttt{bird} is blue because removing it might even increase the predicted similarity score (depending on cross-modal interactions).
Based on the \methodx, more detailed visualizations can be created. 

\textbf{Conditioning on tokens.}
To investigate the role of a specific token $i \in N_\txt \cup N_\img$, we can condition the interaction space on $i$ and retrieve all interactions of the form $ \expl_{\{i,j\}} $ for $ j \in N \setminus \{i\}$. 
This conditional view highlights how token $i$ interacts with all other tokens and is especially useful for analyzing its contextual relevance across modalities. 
Then, we project the interactions onto the first-order and visualize them with a heatmap (Figure~\ref{fig:appendix_guide_on_interpreting}\textbf{C}). 
Conditioning on \texttt{elephant} shows the greatest positive interactions with the body in the image tokens and the word \texttt{small}. 
Conditioning on the image token containing the bird mostly interacts with \texttt{white} and \texttt{bird}.

\textbf{Finding interesting subsets.} 
\methodx explanations allow searching for interesting subsets $M$ like $M_{\expl,\max,k}$ or $M_{\expl,\min,k}$ for varying sizes of $k$. 
We can search the space of interactions and visualize only the explanations for tokens in these subsets, or overlay the masks on top of the input. 
In Figure~\ref{fig:appendix_guide_on_interpreting}\textbf{D}, we illustrate $M_{\expl,\max,7}$ containing tokens related to \texttt{elephant}. 
Starting the search from the \texttt{bird} token retrieves the $k=4$ subset with information about the bird. 
While the elephant subset increases the similarity, the bird subset could slightly decrease the similarity, presumably because the model would find the isolated image--text similarity of the elephant clearer.

\textbf{Visualizing heatmaps.}
Based on Theorem~\ref{thm:first-order}, second-order explanations provided by \methodx can be converted into first-order explanations. 
Hence, \methodx explanations can also be visualized with heatmaps (Figure~\ref{fig:appendix_guide_on_interpreting}\textbf{E}). 
Similar to baselines such as Grad-ECLIP or Shapley values, we overlay the converted first-order \methodx explanations onto the input. 
The scores are scaled \textit{jointly} across both modalities since we retrieve the explanation scores across both modalities.
An alternative could be to scale scores \textit{independently} with different maximum values (one per modality). 
Joint scaling allows for relative comparisons, which is not the case for independent scaling, where two tokens are visualized with full color intensity. 
In Figure~\ref{fig:appendix_guide_on_interpreting}\textbf{E} (\textbf{left}), the independently scaled heatmap shows that the vision token containing the bird and the \texttt{elephant} text token have the highest scores in their respective modality. 
Jointly scaling the heatmap reveals that the \texttt{elephant} text token is of a higher score than the vision token containing the bird.

\begin{figure}[ht]
    \centering
    \input{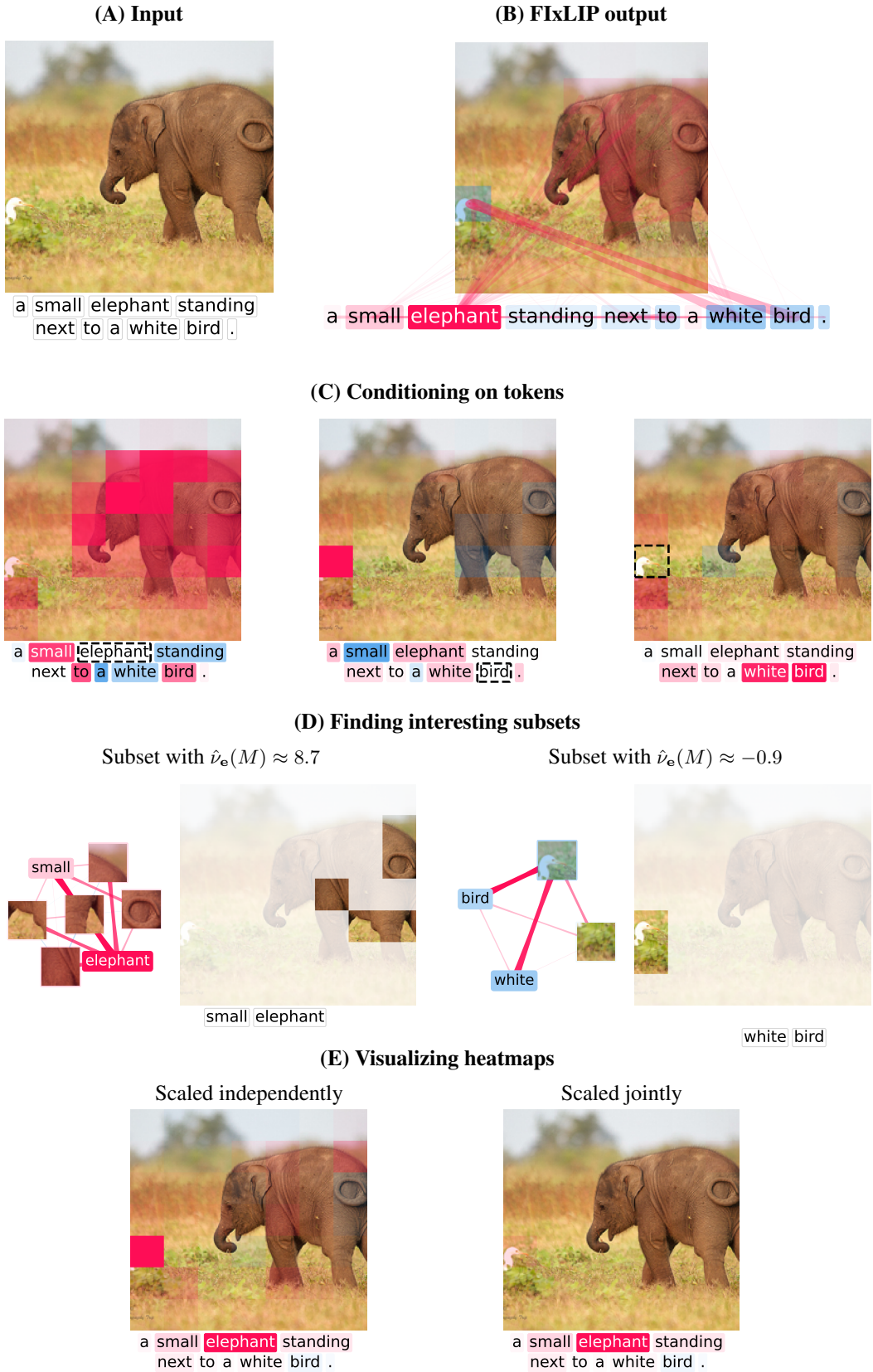}
    \caption{\textbf{Showcase of explanation visualization.} A comprehensive description is in~Appendix~\ref{app:guide_on_interpreting}.}
    \label{fig:appendix_guide_on_interpreting}
\end{figure}

\clearpage
\subsection{Additional examples}

Figure~\ref{fig:visual_example_conditioning} demonstrates the utility of conditioning on tokens.
Figure~\ref{fig:visual_example_text} shows intriguing examples of interactions in inputs with text written in an image, which is related to the typographic capabilities of large vision--language models~\citep{tong2024eyes,gong2025figstep,cao2025scenetap}.

\begin{figure}[h]
    \centering
    \includegraphics[width=\linewidth]{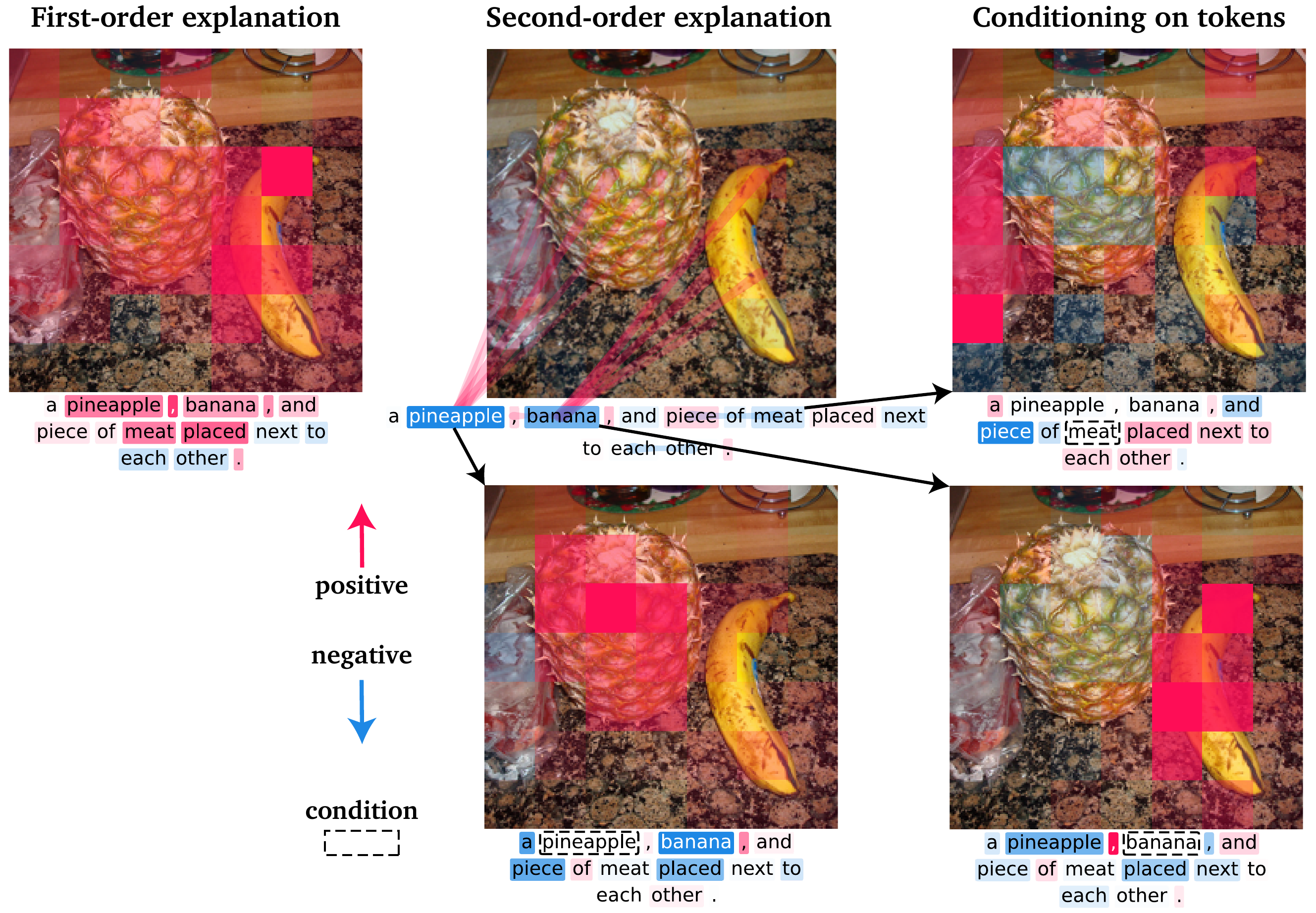}
    \caption{\textbf{\methodx provides a utility of conditional heatmap generation.}
    Comparison between a first-order attribution explanation and a second-order interaction explanation, which is able to recognize complex nuances of the presented scene.}
    \label{fig:visual_example_conditioning}
\end{figure}

\begin{figure}
    \centering
    \includegraphics[width=0.51\linewidth]{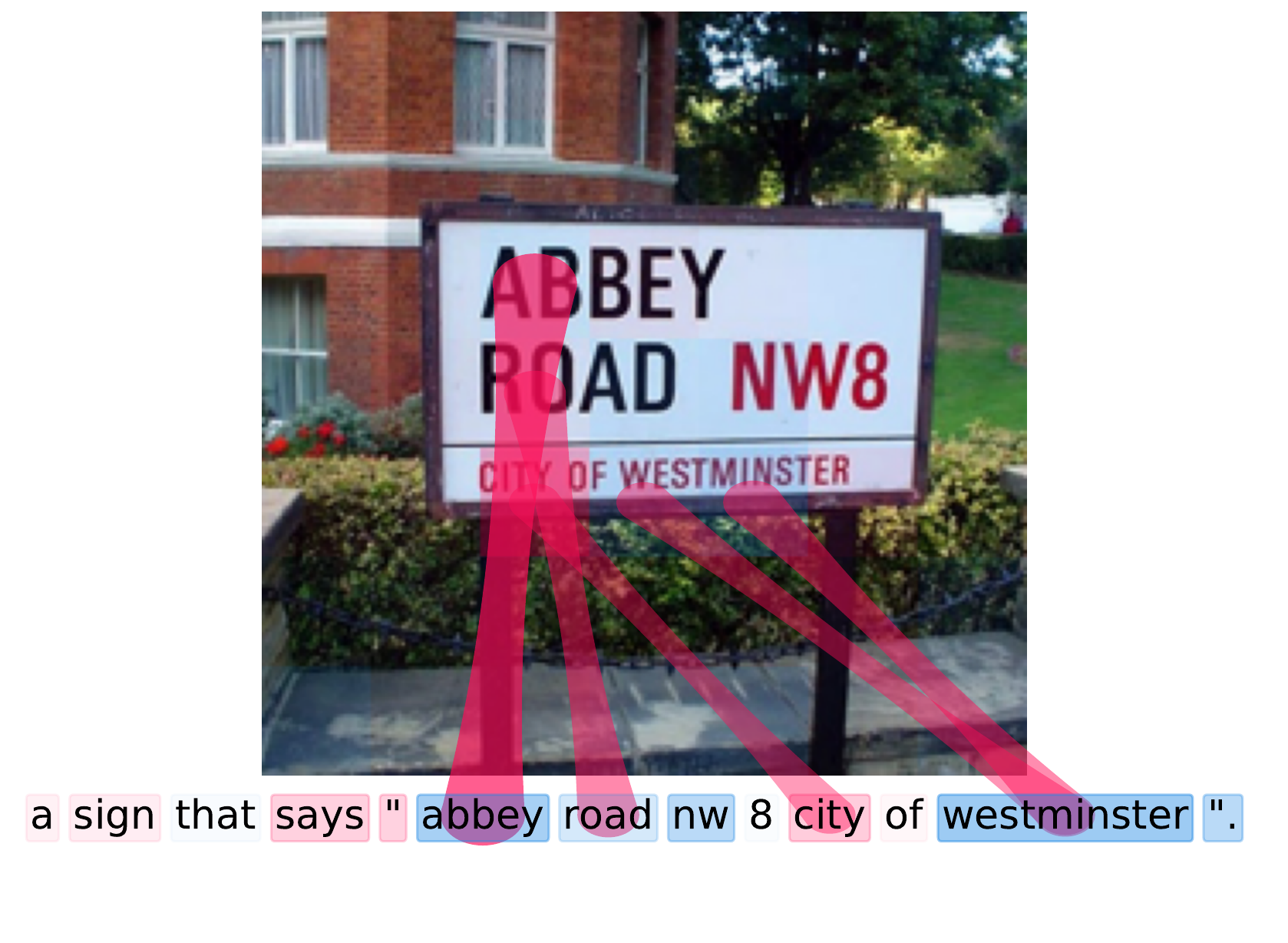}
    \includegraphics[width=0.51\linewidth]{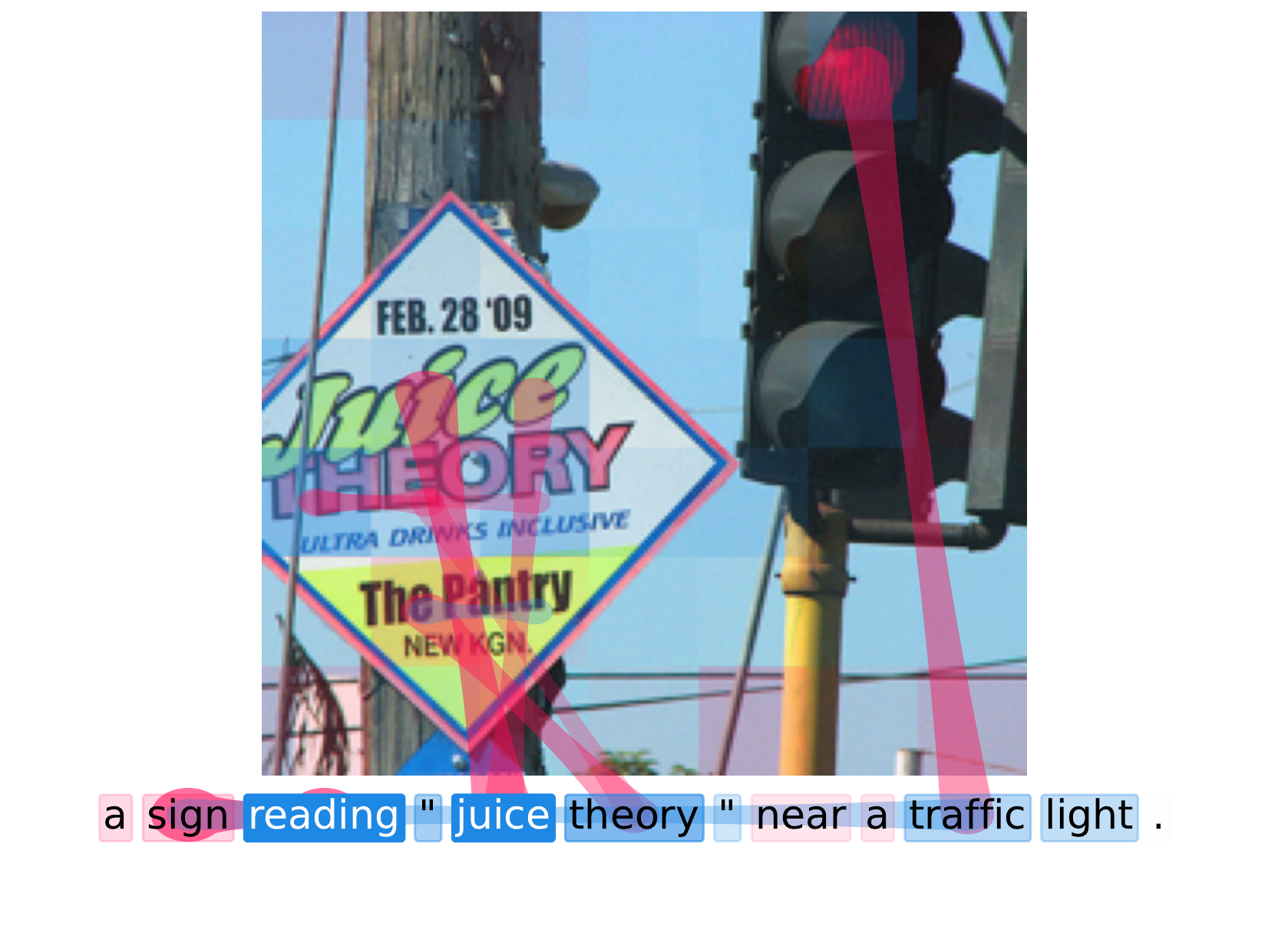}
    \includegraphics[width=0.51\linewidth]{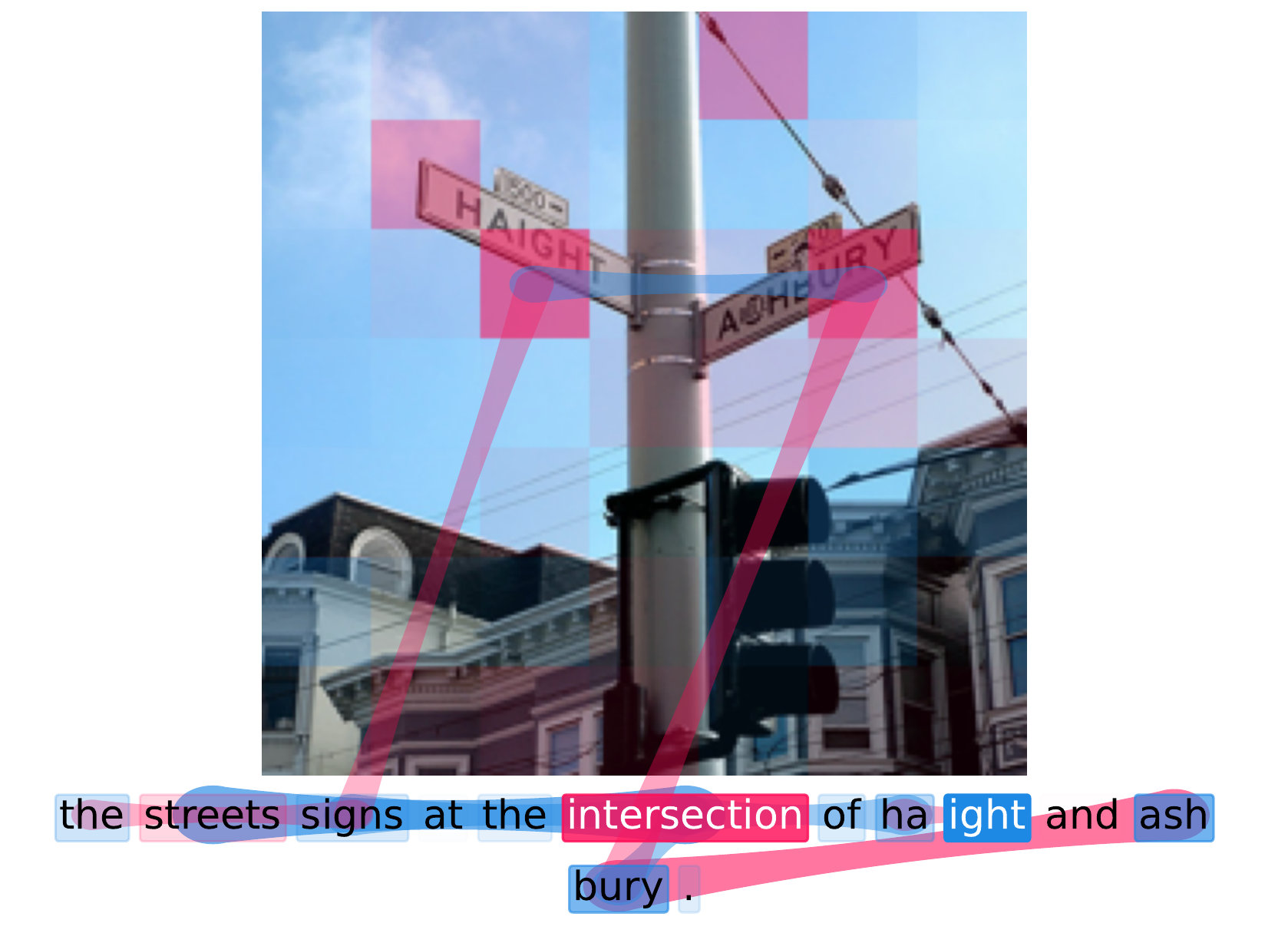}
    \caption{\textbf{Intriguing examples.} 
    Top interactions in inputs with text written in an image.}
    \label{fig:visual_example_text}
\end{figure}

\clearpage
\subsection{Comparison between CLIP (ViT-B/32) and SigLIP-2 (ViT-B/32)}\label{app:comparison_clip_siglip}

See Figure~\ref{fig:appendix_compare_clip_siglip}. 
Note that these models have different text tokenizers and input image resolutions, i.e. $224\times224$ vs. $256\times256$, which is why the images differ slightly.

\begin{figure}[h]
    \centering
    \includegraphics[width=\linewidth]{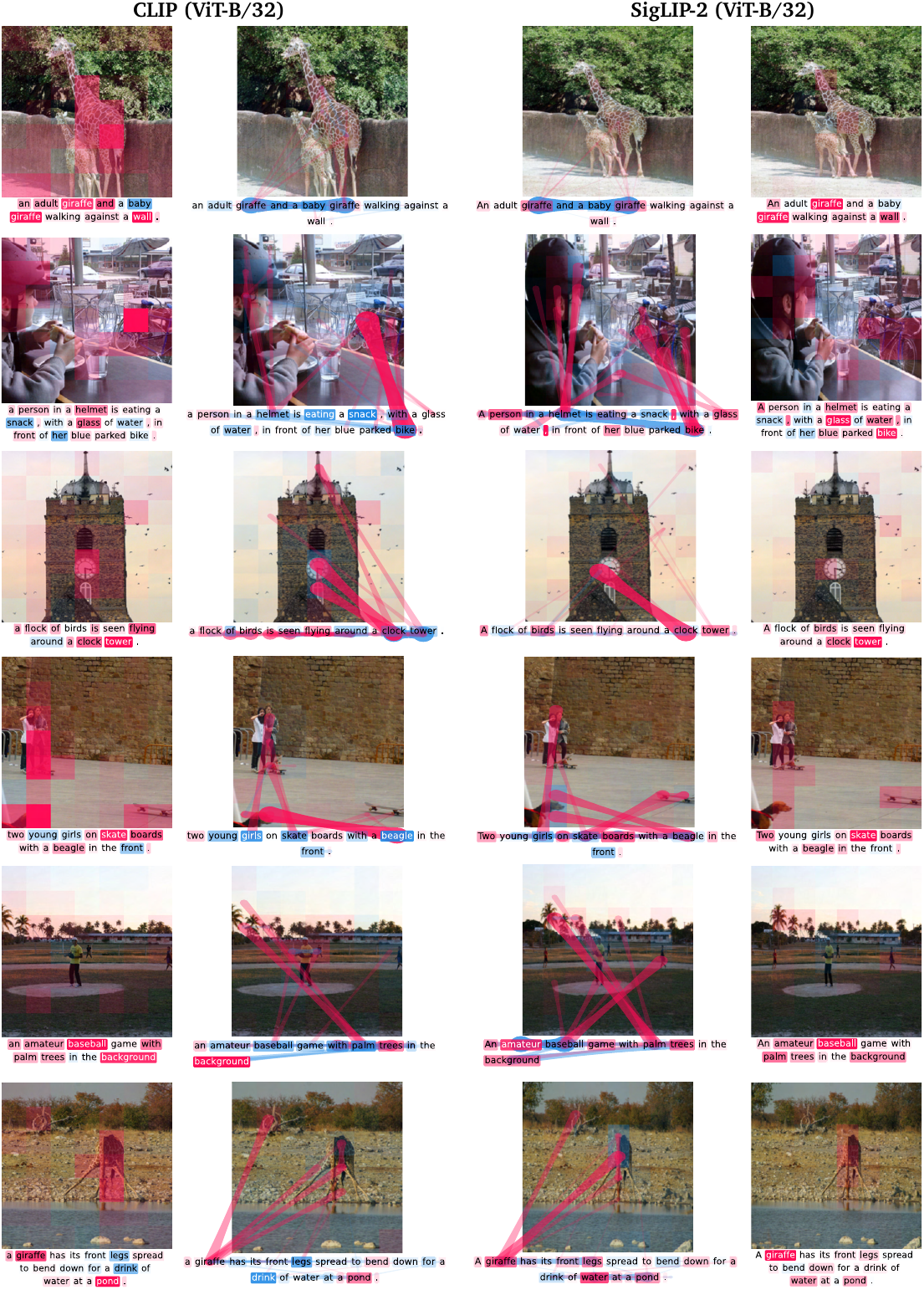}
    \caption{Visual comparison between \methodx of CLIP (\textbf{left}) and SigLIP-2 (\textbf{right}).}
    \vspace{-1em}
    \label{fig:appendix_compare_clip_siglip}
\end{figure}

\clearpage
\subsection{Comparison between CLIP (ViT-B/32) and CLIP (ViT-B/16)}

Visual examples between CLIP (ViT-B/32) and CLIP (ViT-B/16) are in Figure~\ref{fig:appendix_compare_clip32_clip16}. 
Notably, in the example including the clock (Fig.~\ref{fig:appendix_compare_clip32_clip16}, \textbf{row 3}), conditioning on the vision token containing the letters \texttt{T} and part of \texttt{O} reveals a strong interaction with another vision token of this word in the ``sign'' \emph{and} the text token \texttt{town}. 
This may suggest the presence of a \emph{third-order} interaction between these tokens.

\begin{figure}[h]
    \centering
    \includegraphics[width=0.98\linewidth]{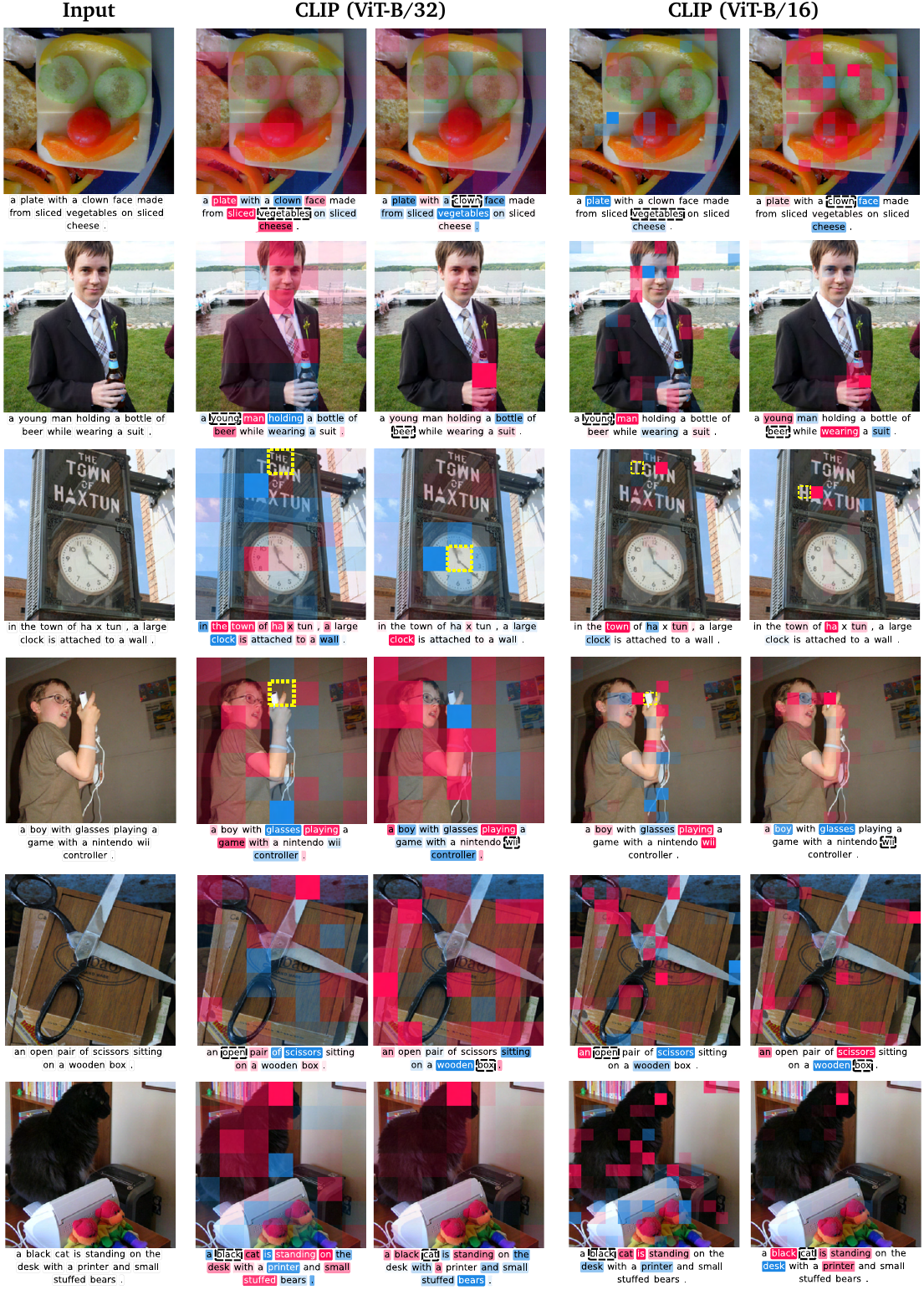}
    \caption{
    Visual comparison between \methodx of CLIP ViT-B/32 (\textbf{left}) and ViT-B/16 (\textbf{right}).
    Each heatmap is conditioned on the selected input token (in black), also for vision tokens (in yellow).
    }
    \vspace{-1em}
    \label{fig:appendix_compare_clip32_clip16}
\end{figure}

\end{document}